\def\eqref#1{equation~\ref{#1}}
\def\1{\bm{1}}
\DeclareMathAlphabet{\mathsfit}{\encodingdefault}{\sfdefault}{m}{sl}
\SetMathAlphabet{\mathsfit}{bold}{\encodingdefault}{\sfdefault}{bx}{n}
\DeclareMathOperator*{\argmin}{arg\,min}
\newcommand{\TCS}{\widehat{TC}}
\pgfplotsset{compat=1.18} % Adjust compatibility version if needed
\theoremstyle{plain}
\newtheorem{theorem}{Theorem}[section]
\theoremstyle{definition}
\theoremstyle{remark}
\newtheorem{remark}[theorem]{Remark}
\DeclareMathOperator{\TC}{TC}
\DeclareMathOperator{\PGM}{PGM}
\DeclareMathOperator{\OT}{OT}
\DeclareMathOperator{\OPT}{OPT}
\DeclareMathOperator{\WOPT}{WOPT}
\definecolor{blue(ryb)}{rgb}{0.01, 0.28, 1.0}
\definecolor{blue(pigment)}{rgb}{0.2, 0.2, 0.6}
\definecolor{camouflagegreen}{rgb}{0.47, 0.53, 0.42}
\definecolor{camel}{rgb}{0.76, 0.6, 0.42}
\definecolor{bluegray}{rgb}{0.4, 0.6, 0.8}
\definecolor{blue(pigment)}{rgb}{0.2, 0.2, 0.6}
\definecolor{cambridgeblue}{rgb}{0.64, 0.76, 0.68}
\definecolor{color1}{RGB}{166,206,227}
\definecolor{color2}{RGB}{31,120,180}
\definecolor{color3}{rgb}{0.7, 0.75, 0.71}
\definecolor{mismatchColor}{RGB}{165,42,42}  % brown
\definecolor{partialityColor}{RGB}{0,128,128}
\title{Learning Partial Graph Matching via Optimal Partial Transport}
\author{Gathika Ratnayaka\textsuperscript{\rm 1}, James Nichols\textsuperscript{\rm 2}, \& Qing Wang\textsuperscript{\rm 1},  \\
    \textsuperscript{\rm 1} School of Computing, Australian National University, Australia\\
    \textsuperscript{\rm 2}
Biological Data Science Institute, Australian National University, Australia\\
\texttt{\{gathika.ratnayaka,james.nichols,qing.wang\}@anu.edu.au}}
\begin{document}

\maketitle

\begin{abstract}

Partial graph matching extends traditional graph matching by allowing some nodes to remain unmatched, enabling applications in more complex scenarios. However, this flexibility introduces additional complexity, as both the subset of nodes to match and the optimal mapping must be determined. While recent studies have explored deep learning techniques for partial graph matching, a significant limitation remains: the absence of an optimization objective that fully captures the problem’s intrinsic nature while enabling efficient solutions. In this paper, we propose a novel optimization framework for partial graph matching, inspired by optimal partial transport. Our approach formulates an objective that enables partial assignments while incorporating matching biases, using weighted total variation as the divergence function to guarantee optimal partial assignments. Our method can achieve efficient, exact solutions within cubic worst case time complexity. Our contributions are threefold: (i) we introduce a novel optimization objective that balances matched and unmatched nodes; (ii) we establish a connection between partial graph matching and linear sum assignment problem, enabling efficient solutions; (iii) we propose a deep graph matching architecture with a novel partial matching loss, providing an end-to-end solution. The empirical evaluations on standard graph matching benchmarks demonstrate the efficacy of the proposed approach.
\end{abstract}

\section{Introduction}
\label{sec:introduction} 

Graph matching is a fundamental problem in network analysis, aiming to establish one-to-one correspondences between nodes in two graphs based on a defined objective. It has broad applications across various fields, including computer vision~\citep{sun2020survey}, bioinformatics~\citep{zaslavskiy2009global}, and social network analysis~\citep{zhang2019graph}, where it is used to solve complex real-world problems. Traditional graph matching assumes a bijective mapping between the nodes of two graphs or a total injective mapping from the smaller graph to the larger one. However, these assumptions often restrict its applicability in more complex, real-world scenarios. \emph{Partial graph matching}, a generalized version of the graph matching problem, addresses these limitations by allowing some nodes in both graphs to remain unmatched~\citep{wang2023deep,jiang2022graph}. It seeks an optimal partial assignment, an injective partial function between node sets, thereby expanding the practical utility of graph matching. This approach is particularly useful in applications where not all nodes have meaningful counterparts. For example, in image keypoint matching, not all keypoints in two given images have correspondences~\citep{jiang2022graph}, and in biological networks, certain proteins may lack direct counterparts in other species~\citep{zaslavskiy2010graph}.
 
Solving partial graph matching is inherently more complex than traditional graph matching due to the additional challenge of determining both the subset of nodes to match and the optimal mapping itself. Graph matching is generally framed as a combinatorial optimization problem, where the choice of optimization objective is crucial. Early works formulated it as a Quadratic Assignment Problem (QAP), which is NP-hard~\citep{koopmans1957assignment, lawler1963quadratic}. Recent approaches ~\citep{wang2021neural, wang2020combinatorial, rolinek2020deep, fey2020deep} use neural networks to learn node representations and derive a cross-graph node-to-node cost matrix. This allows graph matching to be framed as a linear sum assignment problem~\citep{burkard2012assignment, yu2019learning}, assuming a bijective mapping between two node sets of equal size. Even if two graphs have an unequal number of nodes, the problem can still be framed as a linear sum assignment by assuming an injective mapping from the smaller graph to the larger one~\citep{bonneel2019spot, wang2019learning}. A key advantage of this approach is that it can be efficiently solved with cubic worst-case time complexity using the Hungarian algorithm~\citep{kuhn1955hungarian}. However, Partial graph matching does not assume a total mapping, adding an extra layer of complexity. It requires selecting the subset of nodes to be matched while determining the mapping, making the problem more challenging. Consequently, traditional linear assignment methods like linear sum or \textit{k} assignment cannot be directly applied.

Despite its challenging nature, recent studies have attempted to address the partial graph matching problem using deep learning techniques~\citep{jiang2022graph, wang2023deep}. One approach~\citep{jiang2022graph} frames the problem as an Integer Linear Programming (ILP) task with dummy nodes. However, ILPs rely on branch and bound algorithms, which lack polynomial worst-case time complexity, and the use of dummy nodes can hinder node representation learning~\citep{wang2023deep}. Another approach~\citep{wang2023deep} estimates the number of matchings (\textit{k}) between two graphs, solving it as a $k$-assignment problem~\citep{burkard2012assignment} using GreedyTopK algorithm~\citep{wang2023deep}. However, determining the optimal \textit{k} is difficult and requires separate neural modules, leading to multiple training stages. These limitations highlight the need for an efficient optimization objective that captures the inherent nature of partial graph matching.

\paragraph{Present work}We address limitations in partial graph matching by proposing a novel optimization objective that finds an optimal partial mapping between two node sets while identifying which nodes should be matched. We first observe that, despite different frameworks, partial graph matching and optimal partial transport~\citep{bai2023sliced,sejourne2023unbalanced} share the same goal: finding an optimal partial correspondence between two sets of elements (nodes in graph matching and mass points in optimal transport) to minimize a cost function. Specifically, partial graph matching seeks an optimal mapping where not all nodes are matched, while optimal partial transport identifies a plan where not all mass is transported. Drawing from this connection, we explore how the principles of optimal partial transport can be leveraged to formulate an optimization objective for partial graph matching. However, a key difference emerges: partial graph matching requires the mapping between two sets to be an injective partial function (i.e., a partial assignment), whereas optimal partial transport does not. In real-world applications, some nodes may need to be prioritized for matching based on data, requiring the optimization objective for partial graph matching to account for these biases. 

In optimal partial transport formulations,  divergence functions are typically used to penalize untransported mass, which in turn determines the amount of mass that will be transported. We theoretically show that using weighted total variation as the divergence function allows for an optimal partial transport objective that guarantees the existence of a partial assignment while incorporating node matching bias.  Based on these theoretical properties, we define a new optimization problem for partial graph matching. We further demonstrate that the partial matching can solved by embedding it in a linear assignment problem. This enables the use of the Hungarian algorithm to exactly solve the partial graph matching problem.

In summary, we make the following contributions in this work:

\begin{itemize}
    \item We define a new optimization problem for partial graph matching, inspired by optimal partial transport. By incorporating a cost matrix as well as matching biases, this formulation provides a robust optimization objective that carefully balances the selection of matched and unmatched nodes. 
    \item To solve the proposed optimization problem efficiently, we explore the underlying structure of its solution spaces. This reveals a notable embedding of the partial graph matching problem in a linear sum assignment problem. Furthermore, the solutions of the latter assignment problem can be mapped efficiently to solutions of the partial matching problem, which are themselves optimal. This allows us to solve the partial graph matching problem exactly using the Hungarian algorithm.
    \item 
    Building on the theoretical insights of our proposed optimization objective, we introduce a deep graph matching architecture that embeds feature and structural properties into the cross-graph node-to-node cost matrix and matching biases. This architecture provides an end-to-end solution for the partial graph matching problem, incorporating a novel loss function called \emph{partial matching loss}.    
\end{itemize}

We conduct experiments to empirically validate our proposed approach on partial graph matching benchmarks. The results demonstrate the efficacy and efficiency of the proposed approach.

\section{Related Work}
\label{sec:related}

\subsection{Deep graph matching}
Several graph matching methods leverage neural networks to learn matching-aware node embeddings~\citep{jiang2022graph, rolinek2020deep, wang2020combinatorial, yu2019learning, gao2021deep}. These methods integrate cross-graph node affinity with feature and structural data to learn node embeddings. The learned embeddings are then used to derive cross-graph node-to-node affinities. Then, the soft correspondences between nodes are typically obtained by applying Sinkhorn normalization on the cost-graph node-to-node affinity matrix. Once the soft correspondence matrix is obtained, it is projected  into the space of permutation like binary matrices to achieve one-to-one node correspondences using algorithms such as the Hungarian algorithm~\citep{kuhn1955hungarian} and Stable Matching algorithm~\citep{ratnayaka2023contrastive}, assuming a total injective mapping from the smaller graph to the larger graph. 

Optimal transport techniques have also been discussed to solve the graph matching problem ~\citep{xu2019gromov}, where each node in the source graph is matched to a node in the target graph. Moreover, optimal partial transport techniques have been applied to subgraph matching ~\citep{pan2024subgraph}, where a preset fraction of mass (corresponding to a fixed number of nodes) from a smaller graph is matched to nodes in a larger graph. This approach constrains the matching process by specifying in advance how many nodes from the smaller graph must be matched. However, these approaches are not directly applicable to partial graph matching, which requires identifying corresponding nodes between two graphs without any prior assumptions about the number of nodes to be matched. Moreover,spectral methods have been successfully applied to both graph matching ~\citep{wang2019functional} and also to compactly encode maps between graphs and
subgraphs ~\citep{pegoraro2022spectral}. While these problems share similarities in attempting to find correspondences at node or graph level, they differ fundamentally from partial graph matching in their matching objectives and constraints.

So far, only a few deep learning approaches have addressed partial graph matching. \cite{jiang2022graph} introduces dummy nodes to bypass explicit match number estimation, framing the problem as an Integer Linear Programming (ILP) task. However, ILP with branch and bound suffers from high time complexity, and the use of dummy nodes can distort node representations by implying higher similarity with unmatched nodes. \cite{wang2023deep} attempts to solve partial graph matching as a k-assignment problem. However, as the match count (k) between two graphs is not known in a partial graph matching problem, their approach to handle partial matchings include two steps, first estimating the match count (k) by using a separate neural module and solving an entropic optimal transport problem and then using the estimated k value to solve partial graph matching as a k-assignment problem. This two step approach increase computational complexity and error propagation.
Another line of work ~\citep{nurlanov2023universe,jiang2022learning} to perform partial graph matching have explored universe graph representation learning, where a "universe" graph is constructed for each object class in keypoint-based image analysis. However, these methods require prior knowledge specific to each class, such as the exact number of distinct keypoints (nodes) in a class, making them difficult to generalize to graphs or images from previously unseen classes. In contrast, our work addresses a more general form of partial graph matching problem that operates without any class-specific assumptions, allowing it to work with arbitrary graph structures. 

\subsection{Optimal Transport and the assignment problem}

Assignment problems, especially the linear sum assignment problem, can be modeled as an Optimal transport problem by considering assignment costs as transportation costs between two discrete distributions~\citep{burkard2012assignment}. The Hungarian algorithm~\citep{kuhn1955hungarian} which is a well-known methods for solving the linear sum assignment problem, operates with a worst-case time complexity of $O(n^3)$, and is widely used in many real world applications~\citep{munkres1957algorithms,yu2019learning}.

In contrast to the linear sum assignment problem, partial graph matching can be considered an assignment problem where elements in both sets can remain unassigned. This characteristic makes conventional optimal transport methods unsuitable for solving partial assignment problems. While efforts have been made to adapt optimal partial transport for partial assignments~\citep{bai2023sliced, bonneel2019spot}, these methods have limitations. \citep{bai2023sliced} propose an optimal partial transport formulation that can guarantee an optimal solution inducing a partial assignment between two sets, but their algorithm is restricted to one-dimensional data with convex cost metrics, whereas partial graph matching typically involves higher-dimensional data and requires more flexible cost functions. Moreover, their formulation does not account for matching bias of elements, which is crucial in partial graph matching, especially in data-driven approaches where certain elements should be prioritized for assignment. Efficient algorithms ($\leq(O(n^3))$) for solving partial assignments in higher-dimensional spaces with general cost functions have not yet been developed, leaving a gap in the literature.

\section{Background}
\label{sec:bac}

\paragraph{Notations}  We denote the set of source elements as $W_S$ and the set of target elements as $W_T$. We also use $\| \cdot \|_1$ to refer the $L^1$ norm, $\langle , \rangle_{F}$ to refer the Frobenius inner product, and $\mathbf{1}_n$ to denote a column vector of ones with $n$ elements. For any given matrix $\pi \in \mathbb{R}^{m \times n}$, $\pi_{1}$ and $\pi_{2}$ denotes marginals of $\pi$ where $\pi_{1}=\pi\mathbf{1}_n$ and $\pi_{2}=\pi^{\intercal  }\mathbf{1}_m$

\paragraph{Optimal Transport (Balanced)}

Let \( \mu \in \mathbb{R}_{\geq 0}^n \) and \( \nu \in \mathbb{R}_{\geq 0}^m \) be two non-negative vectors representing distributions with equal total mass, such that \( \|\mu\|_1 = \|\nu\|_1 \). We are also given a \emph{cost} matrix \( C \in \mathbb{R}^{m \times n} \), where \( C_{ij} \) denotes the cost of transporting a unit of mass from location \( i \) in the source set \( W_S \) to location \( j \) in the target set \( W_T \). 
The \emph{(balanced) optimal transport problem} considers all possible transport plans $    \Pi(\mu,\nu)= \{ \pi \in \mathbb{R}_{\geq 0}^{m\times n} | \pi \mathbf{1}_n=\mu, \pi^{\intercal  }\mathbf{1}_m =\nu \}
$ and is defined as,
\begin{equation}
    \label{eq:BOT}
   \OT(\mu,\nu) \coloneqq \min_{\pi \in \Pi(\mu,\nu)}\;\langle\,\pi,C \rangle_{F}.
\end{equation}
The marginal conditions $\pi\mathbf{1}_n=\mu$ and $\pi^{\intercal}\mathbf{1}_m =\nu $ impose the mass conservation constraint, ensuring that the total mass is transported from one distribution to the other.

\paragraph{Optimal Partial Transport}
In the optimal partial transport problem, the mass conservation constraint is relaxed, allowing partial mass transportation between distributions. 
While some studies~\citep{chapel2020partial,figalli2010optimal} address predefined amounts of partial mass,
we consider the optimal partial transportation problem where the amount of partial mass being transported is determined by the optimization objective. 

Let $\Pi_{\leq}(\mu,\nu)$ be the set of admissible {\em partial} transport plans  defined as
\begin{equation} 
\label{eq:partial_plans}
    \Pi_{\leq}(\mu,\nu)= \{ \pi \in \mathbb{R}_{\geq 0}^{m\times n} | \pi\mathbf{1}_n \leq \mu, \pi^{\intercal  }\mathbf{1}_m \leq \nu \}.
\end{equation} 
The \emph{optimal partial transport problem} is usually defined as
\begin{align}
\label{eq:POT}
\begin{split}
    \OPT_\rho(\mu,\nu) \coloneqq \min_{\pi\in \Pi_{\leq}(\mu,\nu)}\langle\,\pi,C \rangle_{F} + \rho D(\pi_{1}|\mu)+  \rho D(\pi_{2}|\nu).
\end{split}
\end{align}
Here, $\rho>0$, which is also termed as the unbalancedness parameter, is used to control the tolerance for destroying or creating mass, and $D(\cdot|\cdot)$ represents a divergence between two discrete measures. The term $\langle\pi,C \rangle_F $ captures the cost of transporting the mass by $\pi$ while $ \rho D(\pi_{1}|\mu)+ \rho D(\pi_{2}|\nu)$ accounts for the mass that has not been transported.

In the optimal transport literature, Total Variation (TV) and Kullback-Leibler (KL) divergence are commonly used divergence functions~\citep{sejourne2023unbalanced}. 
Using TV as the divergence function in optimal partial transport is known to identify zero entries in the optimal plan based on the cost matrix ~\citep{bai2023sliced,sejourne2023unbalanced}.

\section{Partial Graph Matching Problem}\label{sec:problem}

We represent a graph as $\mathcal{G} = (V, E)$, where $V$ is the set of nodes and $E$ is the set of edges. Let $\mathcal{G}_S = (V_S, E_S)$ be the source graph and $\mathcal{G}_T = (V_T, E_T)$ be the target graph for matching, with $|V_S|=m$ and $|V_T|=n$. Without loss of generality, we can assume $m \leq n$. 

In this work, we tackle the partial graph matching problem through the lens of optimal partial transport, which offers a flexible and efficient framework for dealing with graphs of different sizes and structures. 
To this end, we define an objective function that is well suited to our partial graph matching task. We first state this objective as a general optimal transport problem, then adapt it to our setting of graph matching.

We represent $V_S$ and $V_T$ as mass vectors $\mu \in \mathbb{R}^{m}$ and $\nu \in \mathbb{R}^{m}$ respectively. We also assume a cost matrix $C \in \mathbb{R}^{m \times n}$, where $C_{ij}$ indicates the cost of moving a unit mass from $i \in V_S$ to $j \in V_T$. We take inspiration from the use of the objective function defined in \cref{eq:POT}, and adapt it to our purposes. In particular, we use a weighted {\em total variation} (TV) divergence, which both ensures sparse optimal transport plans,  
and furthermore allows us to incorporate a {\em matching bias} to each node in the source and target graphs. 
The matching bias weights are given by two vectors, $\alpha \in \mathbb{R}_{\geq 0}^m $ for nodes in $V_S$ and $ \beta \in \mathbb{R}_{\geq 0}^n $ for nodes in $V_T$,
finally resulting in a objective function given by
\begin{align}
    \label{eq:UGM-tc}
    \TC(\pi; C, \alpha, \beta) \coloneqq \langle \pi, C \rangle_F + \rho\left( \langle \alpha, \mu - \pi_1 \rangle 
    + \langle \beta, \nu - \pi_2 \rangle \right).
\end{align}

As we consider marginal constraints $\pi\mathbf{1}_n \leq \mu$ and  $\pi^{\intercal  }\mathbf{1}_m \leq \nu$, the weighted total variation divergences can be written as  $\langle \alpha, \mu - \pi_1 \rangle$, and $\langle \beta, \nu - \pi_2 \rangle$. In the general transport setting our set of feasible partial plans would be $\Pi_{\leq}(\mu, \nu)$ as defined as in \cref{eq:partial_plans}, and the optimal partial transport problem with weighted total variation as the divergence function then is
\begin{align}
    \label{eq:UGM}
\WOPT_{\rho}(\mu, \nu) \coloneqq \min_{\pi \in \Pi_{\leq}(\mu, \nu)} 
\TC(\pi;  C, \alpha, \beta).
\end{align}

In our setting however, we require optimal partial {\em assignments} rather than plans, meaning that each node in one graph is matched to a node in the other graph or no node at all. 
We can define the set $\mathcal{M}$ of all $m \times n$ binary matrices representing possible partial assignments between $V_S$ and $V_T$ as follows
\[
\mathcal{M} \coloneqq \{\pi \in \{0,1\}^{m \times n} \mid \forall i \in V_S, \sum_{j=1}^{n} \pi_{ij} \leq 1 \text{ and } \forall j \in V_T, \sum_{i=1}^{m} \pi_{ij} \leq 1\}.
\]
For any $\pi \in \mathcal{M}$, $\pi_{ij} = 1$ if and only if node $i \in V_S$ is matched with node $j \in V_T$. 
Given the graphs $\mathcal{G}_S$ and $\mathcal{G}_T$, the partial graph matching problem seeks to find an optimal $\pi \in \mathcal{M}$ with respect to the objective given in \cref{eq:UGM-tc}, that is we are solving the 
\begin{align}
    \label{eq:PGM}
\PGM_\rho \coloneqq \min_{\pi \in \mathcal{M}} 
\TC(\pi;  C, \alpha, \beta).
\end{align}
Note that the partial assignments form a subset of the partial transport plans between $\mathbf{1}_m$ and $\mathbf{1}_n$, that is $\mathcal{M} \subset \Pi_{\le}(\mathbf{1}_m, \mathbf{1}_n)$, and in fact we have that
$\mathcal{M} = \Pi_{\le}(\mathbf{1}_m, \mathbf{1}_n) \cap \{ 0, 1\}^{m \times n}$. 

\section{Partial Graph Matching Solutions}
\label{sec:pgms}

In this section we demonstrate that partial assignments solutions of $\PGM_\rho$  (i.e. that sit in $\mathcal{M}$) do indeed occur in the solution set of $\WOPT_\rho(\mathbf{1}_m, \mathbf{1}_n)$, which we can consider to be a {\em relaxed} version of $\PGM_\rho$. We further note that in general solutions of $\WOPT_\rho(\mu, \nu)$ are not necessarily partial assignments as the marginal constraints might rule out solutions in $\mathcal{M}$.

First we consider the following theorem, which shows that our choice of weighted TV defined in \cref{eq:UGM-tc} can distinguish between feasible and non-feasible assignments. 

\begin{restatable}[Infeasible Assignments]{theorem}{lemmaopt} 
    \label{lem:1}
    Let $\pi^\star \in \argmin_{\Pi_{\leq}(\mu,\nu)}\TC(\pi; C,\alpha,\beta)$ be any optimal solution of \cref{eq:UGM}. For any $1 \le i \le m$ and $1 \le j \le n$ we have that $C_{ij}>\rho(\alpha_i+\beta_j)
    \implies 
    \pi^\star_{ij} = 0$. 
\end{restatable}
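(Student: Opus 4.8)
The plan is to argue by contradiction using a local exchange/perturbation argument on the optimal plan $\pi^\star$. Suppose for some pair $(i,j)$ we have $C_{ij} > \rho(\alpha_i + \beta_j)$ but $\pi^\star_{ij} = t > 0$. The idea is to construct a new feasible plan $\tilde\pi$ that strictly decreases the objective $\TC(\cdot; C, \alpha, \beta)$, contradicting optimality of $\pi^\star$.

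First I would define the candidate competitor by simply removing the mass on entry $(i,j)$: set $\tilde\pi = \pi^\star - t\, e_i e_j^\intercal$, where $e_i, e_j$ are standard basis vectors. I need to check $\tilde\pi \in \Pi_{\leq}(\mu,\nu)$: nonnegativity holds since $0 \le \tilde\pi_{ij} = 0 \le \pi^\star_{ij}$ and all other entries are unchanged; the marginal inequalities $\tilde\pi\mathbf{1}_n \le \pi^\star\mathbf{1}_n \le \mu$ and $\tilde\pi^\intercal\mathbf{1}_m \le \pi^\star{}^\intercal\mathbf{1}_m \le \nu$ follow because removing mass only decreases marginals. So $\tilde\pi$ is feasible. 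Next I would compute the change in objective. Using the explicit form in \cref{eq:UGM-tc}, the transport term changes by $-t\,C_{ij}$, while the penalty term $\rho(\langle \alpha, \mu - \pi_1\rangle + \langle \beta, \nu - \pi_2\rangle)$ changes by $+\rho t(\alpha_i + \beta_j)$, since removing $t$ from entry $(i,j)$ reduces $(\pi_1)_i$ by $t$ and $(\pi_2)_j$ by $t$. Hence
\begin{align}
    \label{eq:plan-diff}
    \TC(\tilde\pi; C,\alpha,\beta) - \TC(\pi^\star; C,\alpha,\beta) = -t\,C_{ij} + \rho t(\alpha_i + \beta_j) = -t\bigl(C_{ij} - \rho(\alpha_i+\beta_j)\bigr).
\end{align}
By hypothesis $C_{ij} - \rho(\alpha_i + \beta_j) > 0$ and $t > 0$, so the right-hand side is strictly negative, contradicting the optimality of $\pi^\star$. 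Therefore $\pi^\star_{ij} = 0$.

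I do not anticipate a serious obstacle here: the main point is just to verify that the penalty term, under the marginal constraints $\pi_1 \le \mu$ and $\pi_2 \le \nu$, reduces to the linear expression $\rho(\langle\alpha,\mu-\pi_1\rangle + \langle\beta,\nu-\pi_2\rangle)$ so that the perturbation's effect on it is exactly $+\rho t(\alpha_i+\beta_j)$ — which is immediate from linearity. The only mild subtlety worth a sentence is noting that this linearity is precisely what the weighted total variation (as opposed to, say, KL) buys us: the penalty depends affinely on $\pi$, so the single-entry perturbation has a clean sign. One could also phrase the whole argument more slickly by noting $\TC(\pi;C,\alpha,\beta) = \sum_{i,j}\pi_{ij}\bigl(C_{ij} - \rho(\alpha_i+\beta_j)\bigr) + \rho(\langle\alpha,\mu\rangle + \langle\beta,\nu\rangle)$, a linear function of $\pi$ over the polytope $\Pi_{\leq}(\mu,\nu)$ whose coefficient on $\pi_{ij}$ is strictly positive exactly when $C_{ij} > \rho(\alpha_i+\beta_j)$; any minimizer must then set such entries to their lower bound $0$. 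I would likely present this compact version as the main proof and keep the explicit perturbation as the underlying intuition.
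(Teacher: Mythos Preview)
Your proposal is correct and follows essentially the same single-entry perturbation argument as the paper: both exploit that, under the weighted total variation penalty, the objective is affine in $\pi$ with coefficient $C_{ij}-\rho(\alpha_i+\beta_j)$ on entry $(i,j)$, so any optimal plan must zero out entries with strictly positive coefficient. Your presentation is in fact a bit cleaner---you explicitly verify feasibility of the perturbed plan and offer the compact linear reformulation---but the underlying idea is identical.
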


From this theorem,  we define $\rho(\alpha_{i}+\beta_{j})$ as the \textit{feasibility threshold} for $i$ and $j$. If the cost $C_{ij}$ exceeds this threshold, no mass will be transported between $i$ and $j$. When considering graph matching, this means that no assignment will occur between $i \in V_S$ and $j \in V_T$.

Now we consider the unit-weighted case, that is when $\mu = \mathbf{1}_m$ and $\nu = \mathbf{1}_n$.
The following theorem demonstrates that there is at least one solution of 
\cref{eq:UGM} that is a partial assignment between $V_S$ and $V_T$, i.e.~is a solution of \cref{eq:PGM}.
\begin{restatable}[Existence of Optimal Solution]{theorem}{lemmaoto} 
    \label{lem:2}
    If $\mu = \mathbf{1}_m$, $\nu = \mathbf{1}_n$ there exists an optimal plan $\pi^\star \in \argmin_{\Pi_{\leq}(\mu,\nu)}\TC(\pi; C, \alpha, \beta)$ such that $\pi^\star \in \mathcal{M}$.
\end{restatable}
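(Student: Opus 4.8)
The plan is to start from an arbitrary optimal plan $\pi^\star \in \argmin_{\Pi_{\leq}(\mathbf{1}_m,\mathbf{1}_n)}\TC(\pi;C,\alpha,\beta)$ and argue that, among all optimizers, we can choose one whose support is as ``spread out'' as possible, and then show such a choice must be $\{0,1\}$-valued with at most one nonzero entry per row and column. The key structural fact I would exploit is that $\TC(\pi;C,\alpha,\beta)$ is \emph{linear} in $\pi$: expanding \cref{eq:UGM-tc}, $\TC(\pi;C,\alpha,\beta) = \langle \pi, C + \rho(\alpha\mathbf{1}_n^\intercal + \mathbf{1}_m\beta^\intercal)\rangle_F + \rho(\langle\alpha,\mu\rangle + \langle\beta,\nu\rangle)$, so the problem is a linear program over the polytope $\Pi_{\leq}(\mathbf{1}_m,\mathbf{1}_n)$. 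The constraint set $\Pi_{\leq}(\mathbf{1}_m,\mathbf{1}_n) = \{\pi \geq 0 : \pi\mathbf{1}_n \leq \mathbf{1}_m,\ \pi^\intercal\mathbf{1}_m \leq \mathbf{1}_n\}$ is exactly the set of doubly substochastic $m\times n$ matrices. Its vertices are known (by a Birkhoff--von Neumann–type argument, or because the constraint matrix is totally unimodular with integral right-hand side) to be precisely the $\{0,1\}$ partial permutation matrices, i.e.\ exactly the set $\mathcal{M}$. Since a linear program over a polytope attains its optimum at a vertex, there is an optimal $\pi^\star \in \mathcal{M}$, which is what we want.

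Concretely I would carry out the following steps. First, rewrite $\TC$ as a Frobenius inner product with the single effective cost matrix $\widetilde{C} \coloneqq C + \rho(\alpha\mathbf{1}_n^\intercal + \mathbf{1}_m\beta^\intercal)$ plus a constant, so that $\PGM_\rho$ and $\WOPT_\rho(\mathbf{1}_m,\mathbf{1}_n)$ differ only in whether we optimize over $\mathcal{M}$ or over its convex hull. Second, show $\Pi_{\leq}(\mathbf{1}_m,\mathbf{1}_n)$ is a bounded polytope and that its vertex set is $\mathcal{M}$: I would verify that the $(m+n)\times mn$ constraint matrix defining the row-sum and column-sum inequalities is totally unimodular (it is an interval/bipartite incidence matrix), hence every vertex of the polytope is integral; combined with $0 \leq \pi_{ij} \leq 1$ and the row/column sum bounds, integral feasible points are exactly the matrices in $\mathcal{M}$. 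Third, invoke the fundamental theorem of linear programming: the linear functional $\pi \mapsto \langle\pi,\widetilde C\rangle_F$ attains its minimum over the polytope at some vertex $\pi^\star$, and that vertex lies in $\mathcal{M}$. Since $\mathcal{M}\subset\Pi_{\leq}(\mathbf{1}_m,\mathbf{1}_n)$, this $\pi^\star$ is also a minimizer of $\WOPT_\rho$, completing the proof. As a sanity check, Theorem \ref{lem:1} is consistent with this: entries with $C_{ij} > \rho(\alpha_i+\beta_j)$, i.e.\ $\widetilde C_{ij} > \rho(\alpha_i+\beta_j)+ \text{(the added terms)}$... more simply, those entries have strictly positive reduced presence cost and are forced to zero, which any vertex also respects.

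The main obstacle I anticipate is the vertex characterization of $\Pi_{\leq}(\mathbf{1}_m,\mathbf{1}_n)$ — i.e.\ proving cleanly that every extreme point of the doubly-substochastic polytope is a $0/1$ partial permutation matrix. One route is total unimodularity of the incidence-type constraint matrix (each column of the matrix describing $\sum_j \pi_{ij}\le 1$ and $\sum_i \pi_{ij}\le 1$ has at most two ones, one in a ``row block'' and one in a ``column block'', which is a classic TU certificate for bipartite incidence matrices); another is a direct combinatorial argument showing any fractional feasible $\pi$ lies on a segment inside the polytope (find an alternating cycle in the bipartite support graph and perturb $\pm\varepsilon$ along it). Either argument is standard, but it must be stated carefully for the \emph{rectangular, inequality-constrained} case rather than the usual square Birkhoff polytope. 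Alternatively, if the authors prefer to avoid polytope machinery, one can give a self-contained rounding argument: take any optimal $\pi^\star$, and while it has a fractional entry, locate an alternating cycle of fractional entries in the bipartite support graph, push mass along it in the direction that does not increase $\langle\pi,\widetilde C\rangle_F$ until a new zero is created; this terminates with a $0/1$ matrix of no greater cost, and feasibility for $\Pi_{\leq}(\mathbf{1}_m,\mathbf{1}_n)$ with $0/1$ entries forces membership in $\mathcal{M}$.
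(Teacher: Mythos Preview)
Your approach is correct and arguably more direct than the paper's, but it follows a genuinely different route. The paper does \emph{not} work on the substochastic polytope $\Pi_{\leq}(\mathbf{1}_m,\mathbf{1}_n)$ directly. Instead it embeds the partial problem into a \emph{balanced} optimal transport problem of size $(m+n)\times(m+n)$: it extends $\mu,\nu$ to $\hat\mu=\hat\nu=\mathbf{1}_{m+n}$, pads the cost to a matrix $\widehat C$ (zero outside the top-left $m\times n$ block, and $C_{ij}-\rho(\alpha_i+\beta_j)$ inside), shows via the equivalence propositions in Appendix~A.1 that optimal plans of this balanced problem restrict to optimal partial plans, and then invokes the classical Birkhoff--von~Neumann theorem on the Birkhoff polytope of $(m+n)\times(m+n)$ doubly stochastic matrices. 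An extremal optimal $\hat\pi$ is a full permutation matrix, and its top-left $m\times n$ block is automatically a partial permutation, hence in $\mathcal{M}$. Your argument instead stays in the original $m\times n$ rectangle, identifies the extreme points of the doubly \emph{sub}stochastic polytope as $\mathcal{M}$ (via total unimodularity or an alternating-cycle rounding), and applies the fundamental theorem of LP there. Your route avoids the auxiliary embedding entirely; the paper's route has the advantage of reusing machinery (the OT/OPT equivalence) that it needs elsewhere and of citing only the textbook square Birkhoff theorem rather than its substochastic variant.

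One small correction: in your linearization you have a sign error. Since $\langle\alpha,\pi_1\rangle=\langle\pi,\alpha\mathbf{1}_n^{\intercal}\rangle_F$ enters $\TC$ with a \emph{minus} sign, the effective cost is $\widetilde C = C - \rho(\alpha\mathbf{1}_n^{\intercal}+\mathbf{1}_m\beta^{\intercal})$, not $C+\rho(\cdots)$. This does not affect your argument (linearity is all you use), but it is worth fixing, and it also cleans up your ``sanity check'' paragraph: with the correct sign, $\widetilde C_{ij}>0$ is exactly the infeasibility condition $C_{ij}>\rho(\alpha_i+\beta_j)$ of Theorem~\ref{lem:1}.
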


For the remainder of the paper we consider \(\mu = \mathbf{1}_m\), \(\nu = \mathbf{1}_n\), and to distinguish our objective function in this case, we write $\TCS$ to denote the total cost in this unit-weight setting, that is

\begin{equation}
\label{eq:TC}
    \TCS(\pi; C, \alpha, \beta) 
    \coloneqq 
    \langle \pi, C \rangle_F + \rho\left( \langle \alpha, \mathbf{1}_m - \pi_1 \rangle 
    + \langle \beta, \mathbf{1}_n - \pi_2 \rangle \right).
\end{equation}

From ~\cref{lem:2}, when \(\TCS(\pi; C, \alpha, \beta)\) is the objective function of ~\cref{eq:UGM}, there exists an optimal plan \(\pi \in \mathcal{M}\) that ensures each node is either matched with at most one element or left unmatched, inducing a valid partial matching between \(V_S\) and \(V_T\).
Moreover, based on ~\cref{lem:1}, if \(\alpha_{p} > \alpha_{q}\) for two source elements \(p\) and \(q\), then \(p\) has a higher feasibility threshold with each target element compared to \(q\), giving \(p\) a higher chance of being assigned to a target element.

\section{Solving the Partial Graph Matching Problem}
\label{sec:solvingPGM}

Although \cref{lem:2} shows that there is always an optimal transport plan that induces a valid solution to the partial graph matching problem, it does not specify how to find such a plan. In this section, we address this challenge. 
Our main result demonstrates that it is possible to derive a linear sum assignment problem for which a closed-form mapping exists from any of its given solution to a solution of the optimal partial transport problem defined in \cref{eq:UGM}.
The significance of this result is that it allows us to adapt the celebrate Hungarian algorithm to 
solve the partial graph matching problem with cubic worst-case time complexity algorithm. 

Given two sets of elements with equal cardinality, the linear sum assignment problem aims to find a bijective assignment that minimizes the total cost of assignment ~\citep{burkard2012assignment}. In order to demonstrate a connection between the partial graph matching problem and the linear sum assignment problem, we first create a set $V_{D}$ by appending $(n-m)$ dummy elements to $V_S$, thus making $|V_D| = |V_T| = n$.  Let $\alpha_* > 0 $ be any value such that $ \alpha_*>\smash{\displaystyle\max_{1 \leq i \leq m}\alpha_i}$. We define a cost matrix $\overline{C} \in \mathbb{R}^{n \times n}$ s.t.,
\begin{equation} \label{eq:C_bar}
    \overline{C}_{ij}= 
\begin{cases}
    C_{ij},& \text{if } 1 \le i \leq m \text{ and }C_{ij} \leq \rho(\alpha_i+\beta_j) \\
    \rho(\alpha_i+\beta_j) ,& \text{if }  1 \le i \leq m \text{ and }C_{ij} > \rho(\alpha_i+\beta_j) \\
    \rho(\alpha_*+\beta_j) ,& \text{if }  m < i \le n.
\end{cases}
\end{equation}

Let $\mathcal{P}_n= \{ \pi \in \{0,1\}^{n\times n} \,|\, \pi\mathbf{1}_n = \mathbf{1}_n,  \pi^{\intercal}\mathbf{1}_n = \mathbf{1}_n \}$ denote the set of $n \times n$ permutation matrices. Then, we consider the following linear sum assignment problem,
\begin{equation}\label{eq:LAP}
    \text{LAP}(V_D, V_T, \overline{C}) \coloneqq \min_{\overline{\pi} \in \mathcal{P}} \langle \overline{\pi}, \overline{C} \rangle_F.
\end{equation}

We first establish an equivalence between the objective functions of the optimal partial transport problem in \cref{eq:UGM} and the linear sum assignment problem in \cref{eq:LAP}.    

To this end we define a closed form mapping that will obtain valid solutions to the original partial graph matching problem \cref{eq:PGM} from solutions of the linear sum assignment problem \cref{eq:LAP}. 
We write $h : \mathbb{R}^{n \times n} \to \mathbb{R}^{m \times n}$ to denote the mapping which for $1 \le i \le m$ and $1 \le j \le n$ is given by
\begin{equation} \label{eq:h}
    %\hat{\pi}_{ij}
    [h(\overline \pi)]_{ij} = 
\begin{cases}
    \overline{\pi}_{ij},& \text{if } C_{ij} \leq \rho(\alpha_i+\beta_j), \\
    0 & \text{otherwise}.
\end{cases}
\end{equation}
\begin{restatable}[Equivalence]{lemma}{theoremeqLAP} 
    \label{theorem lap2}
Given a cost matrix $C$ and the weights $\alpha$ and $\beta$, any permutation matrix $\overline \pi \in \mathcal{P}_n$ will have $h(\overline \pi) \in \mathcal{M}$ and $h(\overline \pi)$ satisfies
$\langle \overline{\pi}, \overline{C} \rangle_F = \TCS(h(\overline\pi); C, \alpha, \beta) + \rho (n - m)\alpha_*$.
\end{restatable}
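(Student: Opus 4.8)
The plan is to verify the two claims of the lemma separately and directly from the definitions. First, I would show $h(\overline\pi) \in \mathcal{M}$: since $\overline\pi \in \mathcal{P}_n$ has all entries in $\{0,1\}$, the map $h$ in \cref{eq:h} only ever copies an entry of $\overline\pi$ or replaces it with $0$, so $h(\overline\pi) \in \{0,1\}^{m\times n}$. Moreover, for each row $i \le m$ we have $\sum_j [h(\overline\pi)]_{ij} \le \sum_j \overline\pi_{ij} = 1$ because $h$ only zeroes out entries, and similarly each column sum is bounded by the corresponding column sum of $\overline\pi$, which is $1$. Hence the row/column inequality constraints defining $\mathcal{M}$ hold.

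The substantive part is the cost identity. The strategy is to expand $\langle \overline\pi, \overline C\rangle_F = \sum_{i=1}^n \sum_{j=1}^n \overline\pi_{ij}\overline C_{ij}$ and split the outer sum into the ``real'' rows $i \le m$ and the ``dummy'' rows $m < i \le n$, using the three cases of \cref{eq:C_bar}. For the dummy rows, every entry of $\overline C$ is $\rho(\alpha_* + \beta_j)$, so $\sum_{i=m+1}^n\sum_j \overline\pi_{ij}\rho(\alpha_*+\beta_j) = \rho(n-m)\alpha_* + \rho\sum_j \beta_j \big(\sum_{i=m+1}^n \overline\pi_{ij}\big)$, using that $\overline\pi$'s dummy rows contain exactly $n-m$ ones in total. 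For the real rows, I split each row further into the two sub-cases $C_{ij}\le \rho(\alpha_i+\beta_j)$ and $C_{ij} > \rho(\alpha_i+\beta_j)$: on the first, $\overline C_{ij} = C_{ij}$ and $[h(\overline\pi)]_{ij} = \overline\pi_{ij}$, contributing $\langle h(\overline\pi), C\rangle_F$; on the second, $\overline C_{ij} = \rho(\alpha_i+\beta_j)$ and $[h(\overline\pi)]_{ij} = 0$.

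The key bookkeeping step is then to combine the ``infeasible'' real-row terms with the dummy-row terms to reconstruct the weighted-TV penalty $\rho(\langle \alpha, \mathbf 1_m - \pi_1\rangle + \langle\beta, \mathbf 1_n - \pi_2\rangle)$ with $\pi = h(\overline\pi)$. The idea is that for a fixed column $j$, the column sum of $\overline\pi$ over all $n$ rows is $1$, and this $1$ is partitioned into: the mass landing in feasible real cells (which equals $[h(\overline\pi)]_{\cdot j}$ summed, i.e. $(\pi_2)_j$), plus the mass in infeasible real cells, plus the mass in dummy cells. So the $\beta_j$-coefficients coming from infeasible real cells and dummy cells together sum to $\rho\beta_j(1 - (\pi_2)_j)$, which is exactly the $j$-th term of $\rho\langle\beta, \mathbf 1_n - \pi_2\rangle$. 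For the $\alpha_i$ terms ($i \le m$): row $i$ of $\overline\pi$ sums to $1$, split between its one feasible-or-infeasible real cell; the feasible case contributes mass $(\pi_1)_i$ and the infeasible case contributes $\rho\alpha_i(1-(\pi_1)_i)$, giving the $i$-th term of $\rho\langle\alpha,\mathbf 1_m - \pi_1\rangle$. Assembling: $\langle h(\overline\pi), C\rangle_F + \rho\langle\alpha,\mathbf 1_m-\pi_1\rangle + \rho\langle\beta,\mathbf 1_n-\pi_2\rangle + \rho(n-m)\alpha_* = \TCS(h(\overline\pi);C,\alpha,\beta) + \rho(n-m)\alpha_*$, as claimed.

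The main obstacle I anticipate is the careful accounting of the $\beta_j$ coefficients: one must be sure that for each column $j$, the ones of $\overline\pi$ sitting in infeasible real rows plus those in dummy rows together account for precisely $1 - (\pi_2)_j$ units of $\beta_j$-weighted cost, with no double-counting and no leftover $\alpha_*$ mass attributed to the wrong place. This hinges on the fact that $\overline\pi$ is a genuine permutation matrix (so all row and column sums are exactly $1$), and that $h$ zeroes out exactly the infeasible real cells, so that $\pi_2 = (h(\overline\pi))^\intercal \mathbf 1_m$ collects only the feasible-real contributions. Once that column-wise partition of unity is stated cleanly, the rest is algebraic rearrangement.
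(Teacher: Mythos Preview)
Your proposal is correct and follows essentially the same approach as the paper's proof: both decompose $\langle \overline\pi, \overline C\rangle_F$ according to the three cases of \cref{eq:C_bar} (the paper does this via explicit index sets $\mathcal{I},\mathcal{J},\mathcal{K}$), then use the fact that $\overline\pi$ is a permutation to show the infeasible-real and dummy contributions reconstruct the weighted-TV marginal terms. Your proposal is in fact slightly more complete, since you explicitly verify $h(\overline\pi)\in\mathcal{M}$, which the paper asserts but does not spell out.
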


The following theorem effectively states that $h(\overline\pi^\star)$ will be a valid solution for our problem of interest.

\begin{restatable}[Optimal Solution]{theorem}{theoremeqLAPthree}
\label{theorem:lap2}
If \(\overline{\pi}^\star \in \argmin_{\overline{\pi} \in \mathcal{P}_n} \langle \overline{\pi}, \overline{C} \rangle_F\), 
then \(\pi^\star = h(\overline{\pi}^\star)\) is a solution of partial graph matching problem, that is
 \(\pi^\star \in \argmin_{\pi \in \mathcal{M}} \TCS(\pi; C, \alpha, \beta)\).
\end{restatable}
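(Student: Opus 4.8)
\textbf{Proof plan for Theorem \ref{theorem:lap2}.}

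The plan is to combine the equivalence of objective values established in Lemma \ref{theorem lap2} with the existence result of Theorem \ref{lem:2}, using a standard two-sided optimality argument. The key observation from Lemma \ref{theorem lap2} is that the map $h$ sends permutation matrices in $\mathcal{P}_n$ into $\mathcal{M}$ and shifts the objective by the constant $\rho(n-m)\alpha_*$, i.e.\ $\langle \overline\pi, \overline C\rangle_F = \TCS(h(\overline\pi); C,\alpha,\beta) + \rho(n-m)\alpha_*$ for every $\overline\pi \in \mathcal{P}_n$. Since the shift is a fixed constant independent of $\overline\pi$, minimizing the left-hand side over $\mathcal{P}_n$ is equivalent to minimizing $\TCS(h(\overline\pi); C,\alpha,\beta)$ over $\mathcal{P}_n$; hence $h(\overline\pi^\star)$ minimizes $\TCS(\cdot; C,\alpha,\beta)$ over the subset $h(\mathcal{P}_n) \subseteq \mathcal{M}$. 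What remains is to upgrade this to optimality over all of $\mathcal{M}$.

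First I would record the easy direction: because $\pi^\star = h(\overline\pi^\star) \in \mathcal{M}$, we immediately have $\min_{\pi\in\mathcal{M}}\TCS(\pi;C,\alpha,\beta) \le \TCS(\pi^\star;C,\alpha,\beta)$. For the reverse inequality, I would show that $h(\mathcal{P}_n)$ is ``rich enough'' to contain an optimizer of $\TCS$ over $\mathcal{M}$ — equivalently, that every $\pi \in \mathcal{M}$ can be matched or beaten by some $h(\overline\pi)$ with $\overline\pi \in \mathcal{P}_n$. Concretely, given any $\pi \in \mathcal{M}$, I would construct a permutation $\overline\pi \in \mathcal{P}_n$ by: keeping the assignments of $\pi$ that respect the feasibility threshold (those $(i,j)$ with $i \le m$, $\pi_{ij}=1$, $C_{ij}\le\rho(\alpha_i+\beta_j)$), and completing the partial assignment to a full permutation on $n$ elements by arbitrarily matching the remaining source rows (real and dummy) to the remaining target columns. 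By construction $h(\overline\pi)$ recovers exactly the ``feasible'' part of $\pi$, and using the definition of $\overline C$ in \cref{eq:C_bar} together with the telescoping identity from Lemma \ref{theorem lap2}, one checks $\TCS(h(\overline\pi);C,\alpha,\beta) \le \TCS(\pi;C,\alpha,\beta)$ — the point being that any assignment $(i,j)$ in $\pi$ violating the threshold contributes $C_{ij} > \rho(\alpha_i+\beta_j)$ to $\TCS(\pi;\cdot)$ in a way that is no better than leaving $i$ and $j$ unmatched, which is what $h$ does. Combining with the optimality of $\overline\pi^\star$ over $\mathcal{P}_n$ and the constant-shift identity gives $\TCS(\pi^\star;C,\alpha,\beta) \le \TCS(h(\overline\pi);C,\alpha,\beta) \le \TCS(\pi;C,\alpha,\beta)$ for all $\pi \in \mathcal{M}$, which is the claim.

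Alternatively — and perhaps more cleanly — I would route through $\WOPT_\rho(\mathbf{1}_m,\mathbf{1}_n)$: Theorem \ref{lem:2} guarantees an optimal plan $\tilde\pi \in \mathcal{M}$ for the relaxed problem \cref{eq:UGM}, and Theorem \ref{lem:1} tells us its feasible support respects the thresholds, so $\min_{\pi\in\mathcal{M}}\TCS = \WOPT_\rho(\mathbf{1}_m,\mathbf{1}_n) = \TCS(\tilde\pi;C,\alpha,\beta)$; then I would show every $\pi \in \Pi_{\le}(\mathbf{1}_m,\mathbf{1}_n)$ (in particular every $\pi\in\mathcal{M}$) lifts to a permutation with no larger objective, so $\WOPT_\rho(\mathbf{1}_m,\mathbf{1}_n) + \rho(n-m)\alpha_* = \LAP(V_D,V_T,\overline C)$, and conclude via Lemma \ref{theorem lap2}. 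The main obstacle in either route is the careful bookkeeping in the lifting/projection step: verifying that replacing a threshold-violating entry $C_{ij}$ by the clamped value $\rho(\alpha_i+\beta_j)$ in $\overline C$, and padding with dummy rows of cost $\rho(\alpha_*+\beta_j)$, exactly accounts for the weighted-TV penalty terms $\rho\langle\alpha,\mathbf{1}_m-\pi_1\rangle + \rho\langle\beta,\mathbf{1}_n-\pi_2\rangle$ so that the constant $\rho(n-m)\alpha_*$ comes out right — this is essentially the content of Lemma \ref{theorem lap2}, so the theorem itself should follow quickly once that identity is in hand.
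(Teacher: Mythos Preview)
Your plan is correct and follows the same skeleton as the paper: use Lemma~\ref{theorem lap2} for one direction, and for the other direction lift a partial assignment in $\mathcal{M}$ to a permutation in $\mathcal{P}_n$ and compare objectives. The paper carries out the lifting only for an \emph{optimal} $\pi^\star\in\argmin_{\mathcal{M}}\TCS$ and proves \emph{exact equality} $\langle\overline\pi,\overline C\rangle_F = \TCS(\pi^\star;\cdot)+\rho(n-m)\alpha_*$ (isolated as Lemma~\ref{lemma opt2lap2}); to make the bookkeeping close exactly it invokes an auxiliary observation (Lemma~\ref{lemma marginalf}) that whenever both $i$ and $j$ are unmatched in the optimal $\pi^\star$ one necessarily has $C_{ij}\ge\rho(\alpha_i+\beta_j)$, so every completion entry at such a position falls into the second case of \cref{eq:C_bar}. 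Your Route~1 instead lifts an \emph{arbitrary} $\pi\in\mathcal{M}$ and aims only for the \emph{inequality} $\TCS(h(\overline\pi);\cdot)\le\TCS(\pi;\cdot)$, which suffices for the conclusion and bypasses Lemma~\ref{lemma marginalf} entirely --- a small but genuine simplification.

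One imprecision to fix when you write it up: it is not generally true that ``$h(\overline\pi)$ recovers exactly the feasible part of $\pi$'', since an arbitrary completion may land a real row $i\le m$ on a column $j$ with $C_{ij}\le\rho(\alpha_i+\beta_j)$, and $h$ will then retain that extra entry. This is harmless for the inequality --- an additional feasible match with both endpoints previously unmatched can only lower $\TCS$ further --- but your stated justification (``leaving $i$ and $j$ unmatched, which is what $h$ does'') only covers the drop-infeasible step $\TCS(\pi')\le\TCS(\pi)$. Either argue in two stages, $\TCS(h(\overline\pi))\le\TCS(\pi')\le\TCS(\pi)$, or compute $\langle\overline\pi,\overline C\rangle_F$ directly and note that each completion entry contributes at most the corresponding penalty term in $\TCS(\pi)$ plus, for dummy rows, the extra $\rho\alpha_*$.
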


The proofs of all the theorems and lemmas including ~\cref{theorem:lap2} are provided in the appendix. 

\smallskip
\begin{remark}
It is known that the linear sum assignment problem can be solved using the Hungarian algorithm, which has a cubic worst-case time complexity ~\citep{kuhn1955hungarian}.  Therefore, the worst case time complexity of solving the partial graph matching problem is $O(n^3)$. 
\end{remark}

\section{Partial Graph matching Architecture}

\label{sec:opgm}

 In this section, we present an end-to-end learning architecture (~\cref{fig:1}) for partial graph matching, building upon the theoretical properties discussed earlier. This architecture learns both the cost matrix $C$ and the matching biases of nodes ($\alpha$ and $\beta$). We also introduce a novel loss function, \textit{partial matching loss}, designed to optimize performance in partial graph matching.

\begin{figure*}
\centering
\includegraphics[width=.95\textwidth]{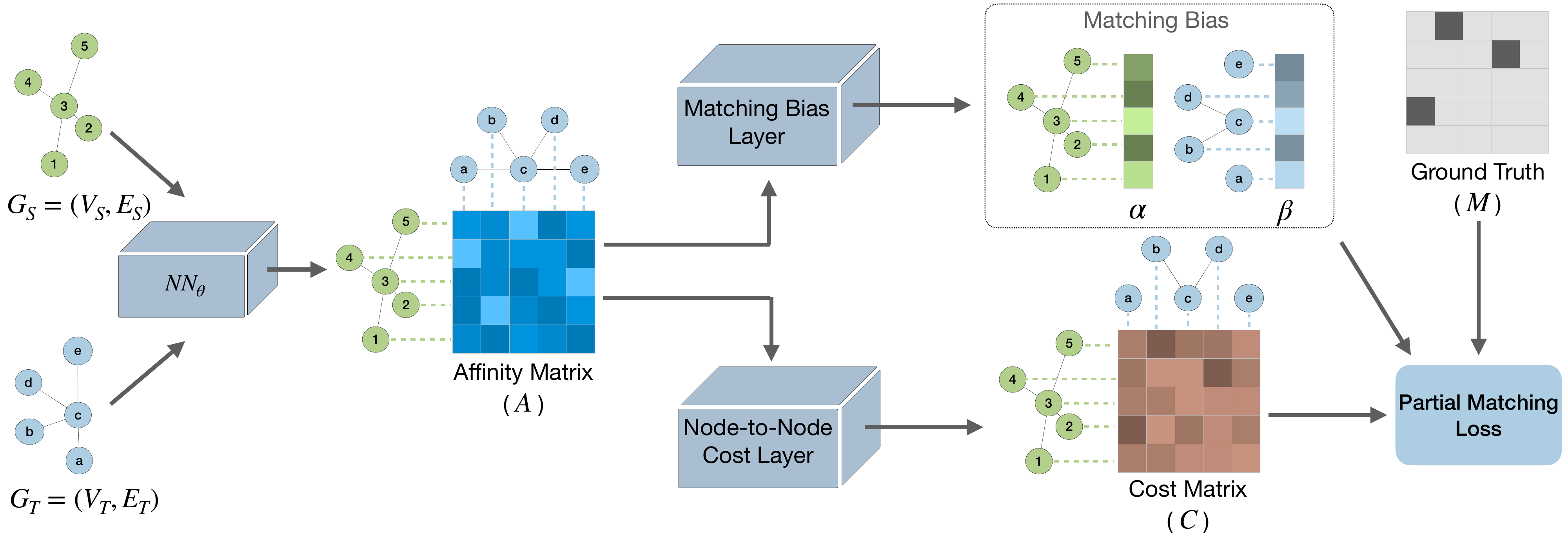}
\caption{
An overview of our end-to-end learning architecture for partial graph matching. Given two input graphs \( G_S \) and \( G_T \), the neural network \( NN_{\theta} \) generates the cross-graph node-to-node affinity matrix \( A \). The affinity matrix \( A \) is then used to learn the matching bias values \( \alpha \), \( \beta \), and also the cost matrix $C$.  The partial matching loss is computed by taking into account the ground truth, \( C \), \( \alpha \), and \( \beta \).
}
\label{fig:1}
\end{figure*}

\paragraph{Graph Affinity Encoding}
Graph affinity encoding uses a neural network $\textsc{nn}_{\theta}$ to transform geometric affinities between nodes into node embeddings, leveraging their features and structural information. These embeddings are then used to construct a matrix representing the cross-graph affinity between nodes in graphs \( \mathcal{G}_S \) and \( \mathcal{G}_T \). Specifically, given two input graphs \( \mathcal{G}_S \) and \( \mathcal{G}_T \), the neural network \( \textsc{nn}_{\theta}: \mathbb{G} \times \mathbb{G} \rightarrow \mathbb{R}^{m \times n} \), parameterized by \( \theta \), returns a cross-graph node-to-node affinity matrix \( A \in \mathbb{R}^{m \times n}\). Essentially, the neural network \( \textsc{nn}_{\theta} \) first applies an embedding function \( f_{emb}: \mathbb{G} \rightarrow \mathbb{R}^{m \times d} \)  to generate node embeddings \( f_{emb}(\mathcal{G}_S) \in \mathbb{R}^{m \times d} \) and \( f_{emb}(\mathcal{G}_T) \in \mathbb{R}^{n \times d} \) for the graphs \( \mathcal{G}_S \) and \( \mathcal{G}_T \), respectively. Then, the affinity function \( f_{aff}: \mathbb{R}^{m \times d} \times \mathbb{R}^{n \times d} \rightarrow \mathbb{R}^{m \times n} \) combines these embeddings to compute a cross-graph node-to-node affinity matrix such that 
\[
\textsc{nn}_{\theta}(\mathcal{G}_S, \mathcal{G}_T) = f_{aff}\left(f_{emb}(\mathcal{G}_S), f_{emb}(\mathcal{G}_T)\right).
\]

\paragraph{Matching Biases and Cost} The matching biases \( \alpha \) and \( \beta \) are calculated based on the cross-graph node-to-node affinity matrix \( A \).  Let  $A^+ \in \mathbb{R}_{\geq 0}^{m \times n}$ be defined as  $A^{+}_{ij} = \max\{A_{ij}, 0\}$. For each node \( i \in V_S \), define \( r_{i} =\max_{1 \leq j \leq n} A^{+}_{ij}\). Similarly, for each node \( j \in V_T \), define \( r_{j} =\max_{1 \leq i \leq m} A^{+}_{ij} \). The matching bias is then calculated as \( \alpha_{i} = 2 \times (\sigma(w_{rs} \times r_{i})-0.5)\) for each \( i \in V_S \) and \( \beta_{j} = 2 \times (\sigma(w_{rs} \times r_{j}) - 0.5)\) for each \( j \in V_T \), where \( \sigma(\cdot) \) is the sigmoid function, and \( w_{rs} \geq 0\) is a learnable parameter. A value of \( \alpha_{i} \) or \(\beta_{j} \) closer to 1 indicates a higher chance of node \( i \) or $j$ being matched, while a value closer to 0 indicates a lower chance. Similarly, for \( \beta_{j} \), a value closer to 1 suggests a higher chance of node \( j \) being matched, and a value closer to 0 suggests a lower chance.

We compute the cost matrix \( C \) between the nodes of \( \mathcal{G}_S \) and \( \mathcal{G}_T \) by applying Sinkhorn normalization \citep{sinkhorn1964relationship} to the cross-graph node-to-node affinity matrix \( A \). This normalization transforms \( A \) into a doubly stochastic matrix \( S \in [0, 1]^{m \times n} \) such that \( S\mathbf{1}_n = \mathbf{1}_m \) and \( S^{\intercal}\mathbf{1}_m \leq \mathbf{1}_n \) \citep{knight2008sinkhorn}. The cost matrix \( C \in [0, 1]^{m \times n} \) is then derived from \( S \), with each element defined as \( C_{ij} = 1 - S_{ij} \) for all nodes \( i \) in $\mathcal{G}_S$ (where \( 1 \leq i \leq m \)) and \( j \)  in $\mathcal{G}_T$ (where \( 1 \leq j \leq n \)).

\paragraph{Partial Matching Loss}
We have two primary learning objectives during training: (1) learning the matching cost matrix 
$C$, and (2) learning the matching biases $\alpha$ and $\beta$ of nodes in the source and target graphs. Below, we propose a new loss function that integrates these two learning objectives.

Let \( \mathbb{I}(\cdot) \) denote the indicator function. Using the feasibility threshold property from \cref{lem:1}, we define the matching attention matrix \( Z \) as:
\[
Z_{ij} = \mathbb{I}\left(M_{ij} = 1 \text{ or } C_{ij} \leq \rho(\alpha_i + \beta_j)\right).
\]

Based on \( Z \) and the ground truth matching matrix \( M \in \mathcal{M} \), we propose the loss term \( L_{cost} \) to guide the learning of the matching cost matrix \( C \):
\[
L_{cost} = - \sum_{i,j} Z_{ij} \left[M_{ij} \log(1 - C_{ij}) + (1 - M_{ij}) \log(C_{ij})\right].
\]
A node pair is included in the loss \( L_{cost} \) under two conditions: (1) The ground truth indicates the pair should be matched (\( M_{ij} = 1 \)), or (2) The pair should not be matched (\( M_{ij} = 0 \)) but the matching cost is below the feasibility threshold (\( C_{ij} \leq \rho(\alpha_i + \beta_j) \)). According to \cref{lem:1}, pairs exceeding the threshold are infeasible and thus are not penalized in the loss, as they cannot produce false positives.

To guide the learning of the matching biases \( \alpha \) and \( \beta \), we propose the loss term \( L_{bias} \):
\[
L_{bias} =  \sum_{i=1}^{m} (M \mathbf{1}_n)_i \left[ 1 - \alpha_i \right]^2 +  \sum_{j=1}^{n} (M^\intercal \mathbf{1}_m)_j\left[ 1 - \beta_j \right]^2.
\]
 The goal of \( L_{bias} \) is to increase the matching bias of a node that should be matched. The partial matching loss is defined as \( L = L_{cost} + \lambda L_{bias} \), where \( \lambda \in (0,1]\) is the regularization parameter.

\section{Experiments}
\label{sec:experiments}

In this section,
we evaluate our proposed partial graph matching approach to empirically verify its performance and robustness, addressing the following aspects: (1) We first evaluate the efficacy of our approach through experiments on image keypoint matching datasets and Protein-Protein Interaction (PPI) networks under varying noise levels; (2) We analyze the effects of the matching biases \( \alpha \) and \( \beta \) on overall performance by considering two model variants: OPGM, with fixed equal matching biases (\( \alpha = \mathbf{1}_m, \beta = \mathbf{1}_n \)), and OPGM-rs, with learnable matching biases. Note that when training OPGM,  $L = L_{cost}$, as we do not need to learn matching biases. (3) We further conduct a sensitivity analysis on the unbalancedness parameter \( \rho \) and regularization parameter \( \lambda \) to understand their impact on partial graph matching performance;  (4) We analyze the runtime efficiency of our approach compared to other baselines. Our code is available at GitHub: \url{https://github.com/Gathika94/OPGMrs.git}.

\subsection{Experimental Setup}
\paragraph{Image Keypoint Matching} 
In this task, we focus on image keypoint matching, which aims to find corresponding annotated keypoints between two given images. We use three image keypoint matching datasets that inherently contain outliers: 1) the \emph{Pascal VOC Keypoint with Berkeley annotations} \citep{everingham2010pascal}, which includes keypoint-annotated images from 20 classes; 2) \emph{SPair-71k} \citep{min2019spair}, featuring 70,958 high-quality image pairs from Pascal VOC 2012 and Pascal 3D+, covering 18 classes; and 3) \emph{IMC PT SparseGM} \citep{wang2023deep}, a recently proposed dataset specifically for partial graph matching.

\smallskip
\noindent\emph{Experimental Setting.~} We followed the same experimental setting and evaluation mechanism as described in ~\citep{wang2023deep} to conduct experiments on the image keypoint matching datasets. Within this experimental setup, a pre-trained VGG16 model~\citep{simonyan2014very} is used to extract visual features of annotated image keypoints. Graphs are created using the extracted keypoint features following the same protocol as in~\citep{wang2023deep}. 
We use the neural network proposed in  GCAN \citep{jiang2022graph} as $NN_{\theta}$ to learn node embeddings and compute the cross-graph node-to-node affinity matrix. 

Consistent with \citep{wang2023deep}, we report the matching F1-score as the evaluation metric. When reporting the results, we run 5 random starts and report the 95\% confidence interval as the error bars, similarly to ~\citep{wang2023deep}. We considered the following baselines:  NGM-v2 ~\citep{wang2020learning}, GCAN~\citep{jiang2022graph}, AFAT-U~\citep{jiang2022graph}, and AFAT-I \citep{wang2023deep}. We consider the implementation and results reported for baselines in \citep{wang2023deep}\footnote{https://github.com/Thinklab-SJTU/ThinkMatch} for evaluations. 
 In ~\cref{tab:spair71K}, and ~\cref{fig:combined_performance_and_inference}(left), GM Network denotes the neural network architecture that each of the given baselines used to learn node embeddings and to obtain the cross-graph node-to-node affinity matrix ($A$). PMH indicates the technique that is used to distinguish matching vs non-matching nodes and to obtain the final partial matching.

%\begin{table*}[htbp]
\begin{table*}[t!]
    \centering
     \resizebox{\textwidth}{!}
    {\renewcommand{\arraystretch}{1.2}
    \begin{tabular}{l l |c c c c c c c c c c c c c c c c c c| c } 
    \toprule
 GM Network & PMH & aero &	bike & bird	& boat	& bottle & bus & car & cat & chair & cow  & dog & horse & mbike & person & plant & sheep & train & tv & mean \\ 
    \toprule
NGM-v2 & dummy & 47.7   & 41.6   & 62.1   & 30.3   & 59.0   & 49.7   & 27.4   & 68.3   & 33.9   & 62.4   & 57.3   & 46.7   & 46.4   & 42.7   & 78.7   & 43.5   & 80.5   & 89.5	& 53.8 $\pm$ 0.4\\
NGM-v2 & AFAT-U & 50.3 &  43.5 &  63.8 &    \textbf{\textcolor{blue(pigment)}{32.4}} &  59.0 & 
 60.1 &  \textbf{\textcolor{blue(pigment)}{39.7}} &  \textbf{\textcolor{blue(pigment)}{68.6}} &  36.1 &  63.6 &  56.5 &  46.3 & 
   \textbf{\textcolor{blue(pigment)}{51.4}} &  43.3 &  77.0 &  \textbf{51.2} &  81.1 &  89.4	& 56.3 $\pm$ 0.4 \\ 
NGM-v2 & AFAT-I & 50.4 & 43.6 & 63.9 & 32.1 & 61.2 & 58.5 & 38.0 &   68.4 & 35.7 & 62.7 & 56.4 & 47.7 & \textbf{51.9} &   \textbf{\textcolor{blue(pigment)}{44.3}} & 78.5 &   \textbf{\textcolor{blue(pigment)}{50.7}} & 79.2 & 91.2 & 56.4 $\pm$ 0.4  \\

GCAN & ILP & 49.0 & 41.3 & 64.0 & 30.3 & 57.3 & 55.0 & 37.4 & 64.8 & 36.6 & 63.0 &   58.0 & 44.4 & 46.4 & 42.6 & 68.4 & 42.3 &   83.2 &   \textbf{\textcolor{blue(pigment)}{91.9}} & 54.2$\pm$ 0.3\\ 
GCAN & AFAT-U & 46.7 & 43.3 & 65.8 & \textbf{33.3} & \textbf{\textcolor{blue(pigment)}{61.5}} & 54.9 & 35.2 &   68.4 & 37.7 & 59.9 & 56.0 & 47.6 & 47.2 & 43.5 &   \textbf{\textcolor{blue(pigment)}{80.3}} & 47.7 & \textbf{83.8} & 89.0 & 55.7$\pm$ 0.4 \\ 
GCAN & AFAT-I & 46.8 &   \textbf{44.3} & 65.9 &   \textbf{\textcolor{blue(pigment)}{32.4}} & \textbf{\textcolor{blue(pigment)}{61.5}} & 53.8 & 33.7 &   68.4 &   \textbf{\textcolor{blue(pigment)}{38.1}} & 60.1 & 56.3 &   \textbf{\textcolor{blue(pigment)}{47.9}} & 48.3 &43.8 & \textbf{81.2} & 48.4 & 82.9 & 88.0 & 55.7 $\pm$ 0.4\\ 
    \hline
GCAN & OPGM &   \textbf{\textcolor{blue(pigment)}{51.9}}	 & \textbf{\textcolor{blue(pigment)}{43.8}}	 & \textbf{\textcolor{blue(pigment)}{66.6}}	 & 28.9 &	60.9	 & \textbf{\textcolor{blue(pigment)}{60.6}}	 & 37.8	 & 67.8	 & 37.7	 &   \textbf{\textcolor{blue(pigment)}{64.3}}	 & \textbf{\textcolor{blue(pigment)}{58.9}}	 & 47.6	 & 47.8	 & 43.1	 & 77.3	 & 49.5	 & 82.1  &	90.9 &   \textbf{\textcolor{blue(pigment)}{56.5}} $\pm$   \textbf{\textcolor{blue(pigment)}{0.4}} \\
GCAN & OPGM-rs & \textbf{53.0} & 43.5 & \textbf{66.7} & 32.1 & \textbf{61.7} & \textbf{61.4} & \textbf{40.7} & \textbf{68.8} & \textbf{38.5} & \textbf{65.8} & \textbf{59.5} & \textbf{51.1} & 47.2 & \textbf{46.2} & 78.6 & 48.7 & \textbf{\textcolor{blue(pigment)}{83.3}} & \textbf{92.3} & \textbf{57.7} $\pm$ \textbf{0.2} \\
    \bottomrule
    \end{tabular}
    }
\caption{Performance (matching F1-score) on the dataset SPair-71K. The best results are colored in \textbf{black} and the second best are in \textcolor{blue(pigment)}{\textbf{blue}}.}
\label{tab:spair71K}
\vspace{-0.2cm}
\hspace*{0cm}\begin{minipage}{0.62\textwidth} % First minipage for the table
    \centering
    \begin{adjustbox}{width=\textwidth}
    \renewcommand{\arraystretch}{1.3} % Adjust row height
    \begin{tabular}{l l | c c c |c | c c c |c}
    \toprule
    \multicolumn{2}{c|}{Dataset Name} & \multicolumn{4}{c|}{IMCPT 50} & \multicolumn{4}{c}{IMCPT 100} \\
    \cline{1-2} \cline{3-10}
GM Network & PMH & reichstag & sacre & st peters & mean & reichstag & sacre & st peters & mean \\
    \hline
   
    NGM-v2 & dummy & 88.5 & 56.1 & 63.0 & 69.2±0.5 & 80.0 & 57.0 & 71.3 & 69.5±0.3 \\
    NGM-v2 & AFAT-U & 90.5 & 58.7 & 66.9 & 72.0±0.3 & 81.7 & 57.0 & 72.2 & 70.3±0.2 \\
    NGM-v2 & AFAT-I & \textbf{92.3} & 58.7 & 66.7 & \textbf{\textcolor{blue(pigment)}{72.8}}±\textbf{\textcolor{blue(pigment)}{0.4}} & 82.0 & 57.0 & 71.4 & 70.1±0.3 \\
  
    GCAN & ILP & 87.2 & 55.1 & 63.0 & 68.4±0.5 & 80.4 & 55.7 & 72.8 & 69.6±0.4 \\
    GCAN & AFAT-U & 86.9 & \textbf{\textcolor{blue(pigment)}{59.4}} & 67.1 & 71.1±0.4 & \textbf{\textcolor{blue(pigment)}{82.6}} & 58.2 & 73.8 & 71.5±0.2 \\
    GCAN & AFAT-I & 91.0 & \textbf{60.3} & \textbf{\textcolor{blue(pigment)}{67.3}} & \textbf{72.9}±\textbf{0.6} & \textbf{82.7} & 57.8 & 72.4 & 70.9±0.4 \\
    \hline
    GCAN & OPGM & 91.2 & 57.0 & \textbf{68.1} & 72.1±0.2 & 81.7 & \textbf{\textcolor{blue(pigment)}{59.1}} & \textbf{76.1} & \textbf{\textcolor{blue(pigment)}{72.3}}±\textbf{\textcolor{blue(pigment)}{0.4}} \\
    GCAN & OPGM-rs & \textbf{\textcolor{blue(pigment)}{91.9}} & 59.2 & 67.0 & 72.7±0.1 & 82.3 & \textbf{60.5} & \textbf{\textcolor{blue(pigment)}{75.7}} & \textbf{72.8±0.1} \\
    \bottomrule
    \end{tabular}
    \end{adjustbox}
\end{minipage}%
\hfill
\begin{minipage}{0.38\textwidth}
\vspace{0.4cm}
\hspace{0.5cm}\begin{tikzpicture}
    \begin{axis}[
        width=0.8\textwidth,
        title={Inference time},
        title style={font=\fontsize{8}{5}\selectfont, yshift=-1ex},
        ylabel={(ms)},
        ylabel style={font=\fontsize{8}{5}\selectfont},
         ylabel style={font=\small, yshift=-5pt}, % Move ylabel closer to y-axis
        ylabel style={font=\small},
        symbolic x coords={OPGM-rs, AFAT-I, AFAT-U, GCAN},
        xtick=data,
        xticklabels={OPGM-rs, AFAT-I, AFAT-U, GCAN},
        tick label style={font=\footnotesize},
        xticklabel style={rotate=40, anchor=north east, xshift=10pt}, % Shift horizontally by 5pt
        nodes near coords,
        nodes near coords style={font=\tiny},
        ybar,
        bar width=10pt,
        ymin=0,
        ymax=675,
        enlarge x limits=0.2,  % Adjust gap between bars
        every axis label/.append style={font=\tiny},
        every tick label/.append style={font=\tiny},
        axis line style={draw=none},  % Make all axis lines invisible
        tick style={draw=none},       % Make all ticks invisible
        x axis line style={draw},     % Redraw bottom axis
        y axis line style={draw},     % Redraw left axis
    ]
    \addplot[fill=teal!60] coordinates {(OPGM-rs,117.8) (AFAT-I,367.1) (AFAT-U,371.8) (GCAN,561.1)};
    \end{axis}
\end{tikzpicture}

\end{minipage}

\vspace{-0.4cm}
\caption{(left) Performance (matching F1-score) on the datasets IMCPT 50 and IMCPT 100, where the best results are colored in \textbf{black} and the second best are in \textcolor{blue(pigment)}{\textbf{blue}}; (right) Inference time  of our model OPGM-rs against the state-of-the-art methods AFAT-I, AFAT-U, and GCAN on the dataset IMCPT 100.}
\label{fig:combined_performance_and_inference}
\end{table*}

\paragraph{PPI Network Matching}

In this task, we focus on Protein Protein Interaction network matching under varying noise levels. PPI network matching dataset is a standard graph matching benchmark that can be used to evaluate performance of a graph matching model under various noise levels ~\citep{liu2021stochastic,ratnayaka2023contrastive}. It is a protein protein interaction (PPI) network of yeasts, consisting of 1004 proteins and
4920 high-confidence interactions among those proteins. The PPI network matching problem is to match this network with  its noisy versions, which contain 5\%, 10\%, 15\%,20\%,25\% additional interactions (low-confidence interactions), respectively.

\smallskip
\noindent\emph{Experimental Setting.~} 
We followed the same experimental setting used in the baselines we consider, SIGMA~\citep{liu2021stochastic} and StableGM~\citep{ratnayaka2023contrastive}. Node correctness (the percentage of nodes that have the same
matching as the ground truth) is used as the evaluation metric, similarly to the previous works ~\citep{liu2021stochastic,ratnayaka2023contrastive}.
We use the same neural network architecture that has been used in ~\citep{liu2021stochastic,ratnayaka2023contrastive} as  $\textsc{nn}_{\theta}$, where the input feature of each node is calculated based on the degree of the node and a 5-layer Graph Isomorphism Network (GIN)~\citep{xu2018powerful} was used. The cross-graph node-to-node affinity matrix is calculated based on the cosine similarities of node embeddings. 

It should be noted that PPI network matching is a graph matching task, aimed at finding a bijective mapping between two graphs with an equal number of nodes. From ~\cref{eq:PGM} and ~\cref{lem:1}, a bijective mapping can be achieved from our method for higher the values of $\rho$. Therefore, for PPI network matching, we set the hyperparameter $\rho=10^{11}$.

\begin{figure}[htbp]
\begin{minipage}{0.33\textwidth}
    \begin{tikzpicture}
        \begin{axis}[
            width=\textwidth,
            height=.85\textwidth,
            xlabel={Noise (\%)},
            xlabel style={font=\fontsize{8}{10}\selectfont},  % Set precise size for xlabel
            ylabel={Node Correctness (\%)},
            ylabel style={font=\fontsize{8}{5}\selectfont},
            legend style={
                at={(0.7,0.8)},
                anchor=south,
                font=\fontsize{6}{5}\selectfont,
                legend cell align=left,
                draw=none,
                fill=none,
                legend columns=2,
                /tikz/every legend/.append style={
                    draw=none,
                }
            },
            legend image post style={scale=0.5},
            xtick={1,2,3,4,5},
            xticklabels={5, 10, 15, 20, 25},
            x tick label style={font=\tiny},
            enlarge x limits={abs=0.05},
            xmin=1, xmax=5.3,
            ymin=40, ymax=95,
            grid=both,
            grid style={line width=.1pt, draw=gray!10},
            every axis label/.append style={font=\small},
            every axis title/.append style={font=\small},
            tick label style={font=\tiny},
            axis lines=left,
            clip=false,
             ytick={40,50,60,70,80},
             yticklabels={40,50,60,70,80} % Optional: Adjust labels if needed
        ]
        \addplot[
            color=blue,
            line width=0.6pt,
            mark=o,
            mark size=1.5pt,
            mark options={fill=blue}
        ]
        coordinates {
            (1,84.7) (2,68.8) (3,57.4) (4,46.7) (5,41.4)
        };
        \addlegendentry{SIGMA}
        \addplot[
            color=red,
            line width=0.6pt,
            mark=o,
            mark size=1.5pt,
            mark options={fill=red}
        ]
        coordinates {
            (1,86.1) (2,75.6) (3,67.9) (4,63.2) (5,57.0)
        };
        \addlegendentry{StableGM}
        
        \addplot[
            color=orange,
            line width=0.6pt,
            mark=o,
            mark size=1.5pt,
            mark options={fill=orange}
        ]
        coordinates {
            (1,88.3) (2,79.8) (3,71.7) (4,66.2) (5,57.7)
        };
        \addlegendentry{OPGM-rs}
    \end{axis}
    \end{tikzpicture}
\end{minipage}
\hfill
\begin{minipage}{0.32\textwidth}
    \begin{tikzpicture}
        \begin{axis}[
            width=\textwidth,
            height=.85\textwidth,
            xlabel={$\rho$},
            ylabel={F1-Score (\%)},
            ylabel style={font=\fontsize{8}{5}\selectfont},
            legend style={
                at={(0.5,1.0)},
                anchor=south,
                font=\fontsize{6}{5}\selectfont,
                legend cell align=left,
                draw=none,
                fill=none,
                legend columns=-1,
                /tikz/every legend/.append style={
                    draw=none,
                    column sep=1ex,
                }
            },
            legend image post style={scale=0.5},
            xtick={0.1, 0.2, 0.3, 0.4, 0.5},
            xticklabels={0.1, 0.2, 0.3, 0.4, 0.5},
            x tick label style={font=\tiny},
            enlarge x limits={abs=0.05},
            xmax=0.55,
            ymin=40,
            ymax=83,
            grid=both,
            grid style={line width=.1pt, draw=gray!10},
            every axis label/.append style={font=\small},
            every axis title/.append style={font=\tiny},
            tick label style={font=\tiny},
            axis lines=left,
            clip=false,
            ]
            \addplot[
                color=blue,
                line width=0.6pt,
                error bars/.cd, 
                y dir=both, 
                y explicit
            ]
            coordinates {
                (0.1,45.3) +- (0,0.4)
                (0.2,52.0) +- (0,0.4)
                (0.3,56.5) +- (0,0.2)
                (0.4,57.7) +- (0,0.2)
                (0.5,56.1) +- (0,0.4)
            };
            \addlegendentry{SPair-71K}
            
            \addplot[
                color=red,
                line width=0.6pt,
                error bars/.cd, 
                y dir=both, 
                y explicit
            ]
            coordinates {
                (0.1,51.2) +- (0,0.4)
                (0.2,63.5) +- (0,0.3)
                (0.3,71.6) +- (0,0.2)
                (0.4,72.8) +- (0,0.1)
                (0.5,70.9) +- (0,0.3)
            };
            \addlegendentry{IMCPT 100}
        \end{axis}
    \end{tikzpicture}
\end{minipage}
\hfill
\begin{minipage}{0.33\textwidth}
    \begin{tikzpicture}
        \begin{axis}[
            width=\textwidth,
            height=.85\textwidth,
            xlabel={$\lambda$},
            ylabel={F1-Score (\%)},
            ylabel style={font=\fontsize{8}{5}\selectfont},
            legend style={
                at={(0.5,1.0)},
                anchor=south,
                font=\fontsize{6}{5}\selectfont,
                legend cell align=left,
                draw=none,
                fill=none,
                legend columns=-1,
                /tikz/every legend/.append style={
                    draw=none,
                    column sep=1ex,
                }
            },
            legend image post style={scale=0.5},
            xmin=0, xmax=1.05,
            ymin=40, ymax=83,
            grid=both,
            grid style={line width=.1pt, draw=gray!10},
            every axis label/.append style={font=\small},
            every axis title/.append style={font=\tiny},
            tick label style={font=\tiny},
            axis lines=left,
            clip=false,
            %x discontinuity=crunch,
            xtick={0.0,0.2,0.4,0.6,0.8,1.0},
            xticklabels={0,0.2,0.4,0.6,0.8,1},
            %extra x ticks={0.105},
            %extra x tick labels={},
            %extra x tick style={grid=major, tick label style={rotate=90, anchor=east}},
        ]
            \addplot[
                purple,
                line width=0.6pt,
                error bars/.cd, 
                y dir=both, 
                y explicit
            ]
            coordinates {
                (0.01,54.0) +- (0,0.4)
                (0.2,57.1) +- (0,0.2)
                (0.4,57.5) +- (0,0.2)
                (0.5,57.7) +- (0,0.2)
                (0.6,57.4) +- (0,0.3)
                (0.8,57.5) +- (0,0.3)
                (1,57.4) +- (0,0.2)
            };
            \addlegendentry{SPair-71K}
            
            \addplot[
                magenta,
                line width=0.6pt,
                error bars/.cd, 
                y dir=both, 
                y explicit
            ]
            coordinates {
                
                (0.01,71.3) +- (0,0.6)
                (0.2,72.5) +- (0,0.2)
                (0.4,72.8) +- (0,0.1)
                (0.6,72.5) +- (0,0.2)
                (0.8,72.6) +- (0,0.2)
                (1,72.3) +- (0,0.2)
            };
            \addlegendentry{IMCPT 100}
        \end{axis}
    \end{tikzpicture}
\end{minipage}\vspace{-0cm}
\caption{(left) Performance (node correctness) of our model OPGM-rs against SIGMA and StableGM on PPI Network Matching with varying noise levels; (middle) Sensitivity analysis of our model OPGM-rs on the unbalancedness parameter $\rho$; (right) Sensitivity analysis of our model OPGM-rs on the regularization parameter $\lambda$. Performance in sensitivity analysis is measured by matching F1-score.}
\label{fig:combined_analysis}
\end{figure}

\subsection{Results and Discussion}

\paragraph{Ext-1 Architecture efficacy}
When examining the results for the Spair-71K dataset in~\cref{tab:spair71K}, both OPGM and OPGM-rs outperform other baselines in terms of mean F1-score. Moreover, OPGM-rs or OPGM outperforms all other baselines in 9 out of 18 classes. It is important to note that the Spair-71K dataset offers several advantages over the Pascal VOC Keypoint dataset, including higher image quality, richer keypoint annotations, and the removal of ambiguous annotations~\citep{rolinek2020deep}.  Therefore, the superior performances of our models on Spair-71K dataset demonstrate the effectiveness of our approach for visual graph matching. 

\cref{fig:combined_performance_and_inference} (left) shows the results for the IMCPT 50 and IMCPT 100 datasets. Our models outperform the baselines in terms of mean F1-score on the IMCPT 100 dataset and in 2 out of 3 classes. For the IMCPT 50 dataset, our method performs comparably to the best baselines. Notably, the IMCPT 100 dataset, which contains the highest number of nodes among visual graph matching datasets~\citep{wang2023deep}, further highlights the efficacy of our approach for partial graph matching. We discuss the evaluations on the Pascal VOC Keypoint dataset in the appendix, where we observe that poor and ambiguous annotations in data can adversely affect the performances of our proposed models.

 As shown in \cref{fig:combined_analysis}~(left), our models OPGM-rs consistently outperforms other baselines across all noise levels for PPI network matching (see the appendix for numerical results). However, at higher noise levels, the performance gap between our model and StableGM narrows. 

\paragraph{Ext-2 Impact of matching biases}
From the results given in ~\cref{tab:spair71K,fig:combined_performance_and_inference} and ~\cref{fig:combined_analysis}~(left), it is clear that OPGM-rs consistently outperforms OPGM in most cases, highlighting the importance of learning the matching biases of nodes in partial graph matching.

\paragraph{Ext-3 Sensitivity analysis}We evaluated the impact of the unbalancedness parameter \(\rho\) and the regularization parameter \(\lambda\) on partial graph matching performance using Spair-71K and IMCPT 100 datasets.
For \(\rho\), we varied its values while keeping other hyperparameters fixed. As shown in \cref{fig:combined_analysis}~(middle), mean F1-scores highlight \(\rho\)'s critical role: smaller values restrict valid matches, while larger values may allow incorrect matches, both reducing performance. For \(\lambda\), starting from \(0.01\), we tested \(\lambda \in \{0.2, 0.4, 0.6, 0.8, 1\}\) with fixed hyperparameters. As shown in \cref{fig:combined_analysis}~(right), the mean F1-scores show slight variation, indicating that the impact of \(\lambda\) is not significant.

\paragraph{Ext-4 Efficiency analysis}We evaluated the average runtime for processing a pair of graphs (i.e., keypoint-annotated images) during the inference phase for OPGM-rs, GCAN, AFAT-I, and AFAT-U using the IMCPT 100 dataset, the largest image keypoint matching dataset~\citep{wang2023deep}. Matching was performed on 3,000 keypoint-annotated pairs. %, and the average time per sample was calculated. 
As shown in~\cref{fig:combined_performance_and_inference}~(right), OPGM-rs demonstrates significantly lower inference time compared to the other models. It is important to note that all models use the same neural network architecture from ~\citep{jiang2022graph} (corresponding to $\textsc{nn}_{\theta}$) to compute the cross-graph node-to-node affinity matrix $A$ and the Sinkhorn algorithm to derive the doubly stochastic affinity matrix $S$, differing only in their partial matching techniques.

\section{Conclusion and Limitations}
\label{sec:conclusion}

In this work, we proposed a new problem formulation for partial graph matching based on optimal partial transport. Our approach can dynamically determine which nodes would be matched or left unmatched during the optimization process. We demonstrate how our problem formulation enabled solving the partial graph matching problem within cubic worst case time complexity. We then showed how our proposed solution for partial graph matching can be effectively integrated to a learning setting. Evaluations on various partial graph matching benchmarks demonstrate that our method outperform the baselines in most of the benchmarks. Beyond these specific tasks discussed in this work, the optimization problem we proposed and the solution mechanism we developed can be adapted to any application requiring optimal partial assignment between two sets.

While our proposed method demonstrated strong performance on datasets with reliable annotations, its effectiveness diminished when faced with unreliable or ambiguous annotations, or higher noise levels in the data. Enhancing the robustness of our method under these challenging conditions is an important direction for future work.

\section*{Acknowledgements}
This research was supported partially by the Australian Government through the Australian Research Council's Discovery Projects funding scheme (project DP210102273).  
%\input{section_Limitations}

%\clearpage
\bibliography{references}
\bibliographystyle{iclr2025_conference}

\appendix
\clearpage
\section{Appendix}

\subsection{Equivalence between Optimal Transport and Optimal Partial Transport}
\label{subsec:POTeqOT}

Inspired by ~\citep{bai2023sliced,caffarelli2010free}, we show that the optimal partial transport problem proposed in ~\cref{eq:UGM} has an equivalance with an optimal (balanced) transport problem. 
In the following we will write $\| \mu \|_1 = \sum_{i=1}^m | \mu_i |$ or $\| \pi \|_1 = \sum_{i,j = 1}^{m,n} | \pi_{ij} |$ to denote the 1-norm or sum of elements of probability vectors and matrices noting that all $\mu_i, \pi_{ij} > 0$.

Let $K$ be a constant satisfying 
$K \geq \| \mu \|_1 + \| \nu \|_1$,  
and we define extended vectors $\hat{\nu}, \hat{\mu} \in \mathbb{R}^{m+n}$ as 
\[
    \hat{\mu}_i= 
    \begin{cases}
    \mu_i & \text{if } i \leq m, \\
    \frac{1}{n}(K-\|\mu\|_1) & \text{if } m < i \le m + n
\end{cases}
\quad\text{and}\quad
\hat{\nu}_j= 
    \begin{cases}
    \nu_j & \text{if } j \leq n, \\
    \frac{1}{m}(K-\|\nu\|_1) & \text{if } n < j \le m + n
\end{cases}
\]
and note that $\| \hat{\mu} \|_1 = \| \hat{\nu} \|_1 = K$.
We define $\widehat{C} \in \mathbb{R}^{(m+n) \times (m+n)}$ such that
 \begin{equation} \label{eq:C_hat_app}
    \widehat{C}_{ij}= 
\begin{cases}
    C_{ij} - \rho(\alpha_{i}+\beta_{j})& \text{if } i \leq m \text{ and }j \leq n, \\
    0 & \text{if }  \text{otherwise.} 
\end{cases}
\end{equation}
We consider the following balanced optimal transport problem 
between the extended vectors $\hat{\mu}$ to $\hat{\nu}$ that optimizes over the set of admissible couplings $    \Pi(\hat{\mu},\hat{\nu})= \{ \pi \in \mathbb{R}_{\geq 0}^{(m+n)\times (m+n)} | \pi \mathbf{1}_{m+n}=\hat{\mu}, \pi^{\intercal  }\mathbf{1}_{m+n} =\hat{\nu} \}
$.
\begin{equation}
\label{eq:OTeqv}    \OT(\hat{\mu},\hat{\nu}) = \min_{{\pi} \in \Pi(\hat{\mu}, \hat{\nu})}\;\langle\,\pi,\widehat{C} \rangle.
\end{equation}

We claim that there exists an equivalence between this OT problem and ~\cref{eq:UGM}.

\begin{restatable}[Equivalent cost of couplings]{proposition}{PropEquiv} 
\label{lem:PropEquiv}
For any $\hat \pi, \hat \pi^\prime \in \Pi(\hat \mu, \hat \nu)$ with 
$\hat{\pi}[1:m][1:n] = \hat{\pi}^\prime[1:m][1:n] = \pi$
\[
\langle \widehat C,\hat \pi \rangle_F   
= \langle \widehat C, \hat \pi^\prime \rangle_F
\]
and
\[
 \TC(\pi; C, \alpha, \beta)
    = \langle \widehat{C}, \hat{\pi} \rangle_F + \rho \left( \langle \alpha, \mu \rangle + \langle \beta, \nu \rangle \right).
\]
\end{restatable}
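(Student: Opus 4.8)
The plan is to analyze how the cost $\langle \widehat{C}, \hat\pi \rangle_F$ decomposes according to the block structure of $\hat\pi$, and observe that only the top-left $m \times n$ block contributes, because $\widehat{C}$ vanishes everywhere else. Writing $\hat\pi$ in block form with blocks indexed by $\{1,\dots,m\} \cup \{m+1,\dots,m+n\}$ (rows) and $\{1,\dots,n\} \cup \{n+1,\dots,m+n\}$ (columns), we have $\langle \widehat{C}, \hat\pi \rangle_F = \sum_{i=1}^m \sum_{j=1}^n \widehat{C}_{ij} \hat\pi_{ij} = \langle \widehat{C}[1:m][1:n], \pi \rangle_F$ since every other entry of $\widehat{C}$ is zero by \cref{eq:C_hat_app}. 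Thus if $\hat\pi$ and $\hat\pi'$ share the same top-left block $\pi$, their costs under $\widehat{C}$ coincide — this gives the first displayed equation immediately.

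For the second equation, I would expand $\langle \widehat{C}, \hat\pi \rangle_F = \sum_{i=1}^m \sum_{j=1}^n (C_{ij} - \rho(\alpha_i + \beta_j)) \pi_{ij} = \langle \pi, C \rangle_F - \rho \langle \alpha, \pi_1 \rangle - \rho \langle \beta, \pi_2 \rangle$, using that $\sum_j \pi_{ij} = (\pi_1)_i$ and $\sum_i \pi_{ij} = (\pi_2)_j$. Comparing with the definition \cref{eq:UGM-tc},
\[
\TC(\pi; C, \alpha, \beta) = \langle \pi, C \rangle_F + \rho \langle \alpha, \mu - \pi_1 \rangle + \rho \langle \beta, \nu - \pi_2 \rangle = \langle \widehat{C}, \hat\pi \rangle_F + \rho \langle \alpha, \mu \rangle + \rho \langle \beta, \nu \rangle.
\]
The remaining step is to identify $\langle \alpha, \mu \rangle + \langle \beta, \nu \rangle$ with $\| \alpha \|_1 + \| \beta \|_1$. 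This is where I need to be slightly careful: the identity as stated holds in the unit-weight regime $\mu = \mathbf{1}_m$, $\nu = \mathbf{1}_n$ (which is the setting the paper has restricted to), since then $\langle \alpha, \mathbf{1}_m \rangle = \sum_i \alpha_i = \| \alpha \|_1$ because $\alpha \geq 0$, and likewise for $\beta$. Substituting gives $\TC(\pi; C, \alpha, \beta) = \langle \widehat{C}, \hat\pi \rangle_F + \rho(\| \alpha \|_1 + \| \beta \|_1)$, as claimed.

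The main obstacle — really the only subtlety — is bookkeeping with the block decomposition and making sure the marginal constraints $\hat\pi \mathbf{1}_{m+n} = \hat\mu$, $\hat\pi^\intercal \mathbf{1}_{m+n} = \hat\nu$ are actually consistent with an arbitrary prescribed top-left block $\pi$ (so that the hypothesis "$\hat\pi[1:m][1:n] = \pi$" is not vacuous and the statement is about a genuine family of couplings). One should check that $\pi \in \Pi_{\le}(\mu,\nu)$ is exactly the condition needed for the off-diagonal blocks to be fillable with nonnegative entries summing to the required row/column totals $\mu - \pi_1$, $\nu - \pi_2$, and the bottom-right block absorbs the slack $K - \|\mu\|_1 - \|\pi\|_1$ worth of mass; but for the statement as phrased we are simply \emph{given} two such couplings, so this consistency check is only needed to know the lemma is non-trivial, not to prove it. No deep idea is required beyond the observation that $\widehat{C}$ is supported on the top-left block.
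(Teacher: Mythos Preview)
Your proposal is correct and follows essentially the same argument as the paper: both observe that $\widehat{C}$ vanishes outside the $[1{:}m]\times[1{:}n]$ block, which immediately gives the first identity, and then expand $\TC$ to recognise it as $\langle \widehat{C},\hat\pi\rangle_F$ plus the constant $\rho(\|\alpha\|_1+\|\beta\|_1)$. You are in fact slightly more careful than the paper in flagging that the identification $\langle\alpha,\mu\rangle=\|\alpha\|_1$ relies on the unit-weight setting $\mu=\mathbf{1}_m$, $\nu=\mathbf{1}_n$; the paper's proof silently invokes \cref{eq:TC} at that step.
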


\begin{proof}

Based on the definition of $\widehat{C}$, we can derive that for any $\pi \in \Pi(\hat{\mu},\hat{\nu})$,

\begin{align*}
 \langle \widehat{C}, \pi \rangle_F = \sum_{i=1}^{m}\sum_{j=1}^{n} \widehat{C}_{ij}\pi_{ij}
\end{align*}

Together with the above result and the fact that $\hat{\pi}[1:m][1:n] = \hat{\pi}^\prime[1:m][1:n] = \pi$, we can derive

\begin{align}
\label{eq:totalcostequivalance}
 \langle \widehat{C}, \hat{\pi} \rangle_F = \sum_{i=1}^{m}\sum_{j=1}^{n} \widehat{C}_{ij}\hat{\pi}_{ij}=\sum_{i=1}^{m}\sum_{j=1}^{n} \widehat{C}_{ij}\pi_{ij} =\sum_{i=1}^{m}\sum_{j=1}^{n} \widehat{C}_{ij}\hat{\pi}^\prime_{ij}= \langle \widehat{C}, \hat{\pi}^\prime \rangle_F
\end{align}

Thus,  $ \langle \widehat{C}, \hat{\pi} \rangle_F = \langle \widehat{C}, \hat{\pi}^\prime \rangle_F$.

From ~\cref{eq:UGM-tc}, we know that,

\begin{align*}
    \TC(\pi; C, \alpha, \beta) 
    &=  \langle \pi, C \rangle_F + \rho\left( \langle \alpha, \mu - \pi_1 \rangle 
    + \langle \beta, \nu - \pi_2 \rangle \right)\\ 
   &= \sum_{i=1}^{m}\sum_{j=1}^{n} \left( C_{ij} - \rho (\alpha_i + \beta_j) \right)\pi_{ij} + \rho \left( \langle \alpha, \mu \rangle + \langle \beta, \nu \rangle \right) \\
    &= \langle \widehat{C}, \hat{\pi} \rangle_F + \rho \left( \langle \alpha, \mu \rangle + \langle \beta, \nu \rangle \right).
\end{align*}

where in the last step we used ~\cref{eq:totalcostequivalance}.
Thus, the proof is complete.
\end{proof}
\begin{restatable}[Equivalence of minimizers]{proposition}{PropMinim} 
For any $\hat\pi \in \Pi(\hat \mu, \hat \nu)$ that minimizes \eqref{eq:OTeqv} 
then the top left corner  $\pi = \hat{\pi}[1:m][1:n]$ satisfies $\pi \in \Pi_{\le}(\mu, \nu)$, and $\pi$ furthermore minimizes \eqref{eq:UGM}.
\end{restatable}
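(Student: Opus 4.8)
The plan is to show that problems \cref{eq:OTeqv} and \cref{eq:UGM} are equivalent at the level of feasible sets and objective values, and then deduce the statement about minimizers. The argument has three pieces: the top-left block of any coupling is a feasible partial plan (easy), every feasible partial plan lifts to a coupling with the same top-left block (the technical core), and \cref{lem:PropEquiv} transfers optimality across the two formulations.

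\textbf{Step 1 (the projection is feasible).} Since $\hat\pi \in \Pi(\hat\mu,\hat\nu)$ has nonnegative entries and $\hat\pi\mathbf{1}_{m+n} = \hat\mu$, for each $i \le m$ we get $(\pi\mathbf{1}_n)_i = \sum_{j=1}^{n}\hat\pi_{ij} \le \sum_{j=1}^{m+n}\hat\pi_{ij} = \hat\mu_i = \mu_i$, and symmetrically $(\pi^{\intercal}\mathbf{1}_m)_j \le \nu_j$ for $j \le n$ from $\hat\pi^{\intercal}\mathbf{1}_{m+n} = \hat\nu$. Hence $\pi = \hat{\pi}[1:m][1:n] \in \Pi_{\le}(\mu,\nu)$ as defined in \cref{eq:partial_plans}.

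\textbf{Step 2 (every partial plan lifts).} This is the step I expect to need the most care. Given an arbitrary $\pi' \in \Pi_{\le}(\mu,\nu)$, I would construct some $\hat\pi' \in \Pi(\hat\mu,\hat\nu)$ with $\hat{\pi}'[1:m][1:n] = \pi'$ by filling the remaining three blocks. Write $a_i = \mu_i - (\pi'\mathbf{1}_n)_i \ge 0$, $b_j = \nu_j - ((\pi')^{\intercal}\mathbf{1}_m)_j \ge 0$, and $s = \|\pi'\|_1 \le \min(\|\mu\|_1,\|\nu\|_1)$. The top-right $m\times m$ block must have row sums $a$, the bottom-left $n\times n$ block must have column sums $b$, and the bottom-right $n\times m$ block must make rows $m+1,\dots,m+n$ sum to $\tfrac1n(K-\|\mu\|_1)$ and columns $n+1,\dots,n+m$ sum to $\tfrac1m(K-\|\nu\|_1)$. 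Choosing the shared (free) marginals uniformly, $c_k = \tfrac1m(\|\mu\|_1 - s)$ for the columns of the top-right block and $d_\ell = \tfrac1n(\|\nu\|_1 - s)$ for the rows of the bottom-left block, each of the three blocks becomes an ordinary transportation polytope with equal total row and column masses, hence is nonempty (fill it with a product coupling). The hypothesis $K \ge \|\mu\|_1 + \|\nu\|_1$ together with $s \ge 0$ is exactly what makes the residual marginals of the bottom-right block, namely $\tfrac1n(K-\|\mu\|_1) - d_\ell = \tfrac1n(K - \|\mu\|_1 - \|\nu\|_1 + s) \ge 0$ and $\tfrac1m(K-\|\nu\|_1) - c_k = \tfrac1m(K - \|\mu\|_1 - \|\nu\|_1 + s) \ge 0$, and its total mass $K - \|\mu\|_1 - \|\nu\|_1 + s \ge 0$, all nonnegative. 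A short computation of the block-wise row/column sums then confirms $\hat\pi' \in \Pi(\hat\mu,\hat\nu)$.

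\textbf{Step 3 (transfer via \cref{lem:PropEquiv}).} Applying \cref{lem:PropEquiv} to $\hat\pi$ gives $\TC(\pi; C,\alpha,\beta) = \langle\widehat{C},\hat\pi\rangle_F + \rho(\|\alpha\|_1 + \|\beta\|_1)$, and applying it to the lift $\hat\pi'$ of an arbitrary $\pi' \in \Pi_{\le}(\mu,\nu)$ from Step 2 gives $\TC(\pi'; C,\alpha,\beta) = \langle\widehat{C},\hat\pi'\rangle_F + \rho(\|\alpha\|_1 + \|\beta\|_1)$. Since $\hat\pi$ minimizes \cref{eq:OTeqv} over $\Pi(\hat\mu,\hat\nu)$ and $\hat\pi' \in \Pi(\hat\mu,\hat\nu)$, we have $\langle\widehat{C},\hat\pi\rangle_F \le \langle\widehat{C},\hat\pi'\rangle_F$, hence $\TC(\pi; C,\alpha,\beta) \le \TC(\pi'; C,\alpha,\beta)$. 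As $\pi' \in \Pi_{\le}(\mu,\nu)$ was arbitrary and $\pi \in \Pi_{\le}(\mu,\nu)$ by Step 1, $\pi$ is a minimizer of \cref{eq:UGM}; in particular this also yields $\WOPT_{\rho}(\mu,\nu) = \OT(\hat\mu,\hat\nu) + \rho(\|\alpha\|_1 + \|\beta\|_1)$.
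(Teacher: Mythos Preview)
Your proof is correct and follows essentially the same route as the paper: project $\hat\pi$ to its top-left block, invoke \cref{lem:PropEquiv} to match objective values up to the additive constant $\rho(\|\alpha\|_1+\|\beta\|_1)$, and conclude. Your Step~2 is in fact more careful than the paper's own argument, which asserts the optimality transfer without explicitly verifying that every $\pi' \in \Pi_{\le}(\mu,\nu)$ lifts to some $\hat\pi' \in \Pi(\hat\mu,\hat\nu)$; your block-by-block product-coupling construction (and the observation that $K \ge \|\mu\|_1+\|\nu\|_1$ is exactly what keeps the residual bottom-right marginals nonnegative) fills this gap cleanly.
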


\begin{proof}
    For any $\hat\pi \in \Pi(\hat \mu, \hat \nu)$, let $\hat{\pi}[1:m][1:n]=\pi$.

From the definitions of $~\hat{\mu}$ and $~\hat{\nu}$, we know that $~\hat{\mu}[1:m]=\mu$ and $~\hat{\nu}[1:n]=\nu$. Therefore, $\pi_1 \leq \mu$ and $\pi_2 \leq \nu$. Thus, $\pi \in \Pi_{\le}(\mu, \nu)$.

Conversely, for any $\pi \in \Pi_{\leq}(\mu, \nu)$, we can construct 
$\hat{\pi} \in \Pi(\hat{\mu}, \hat{\nu})$ with $\hat{\pi}[1:m][1:n] = \pi$. 
After placing $\pi$ in the top-left corner of $\hat{\pi}$, we fill the 
remaining blocks to satisfy the marginal constraints. The block 
$\hat{\pi}[1:m][n{+}1:m{+}n]$ has $\hat{\pi}_{ij} = \frac{\mu_i - (\pi_1)_i}{m}$, 
assigning surplus mass at each source uniformly across the extended columns. 
The block $\hat{\pi}[m{+}1:m{+}n][1:n]$ has 
$\hat{\pi}_{ij} = \frac{\nu_j - (\pi_2)_j}{n}$, assigning deficit mass at 
each target uniformly across the extended rows. The block 
$\hat{\pi}[m{+}1:m{+}n][n{+}1:m{+}n]$ has 
$\hat{\pi}_{ij} = \frac{K - \|\mu\|_1 - \|\nu\|_1 + \|\pi\|_1}{mn}$. 
All entries are non-negative since $\pi \in \Pi_{\leq}(\mu, \nu)$ and 
$K \geq \|\mu\|_1 + \|\nu\|_1$, and the mass is assigned such that the 
marginal constraints $\hat{\pi}\mathbf{1}_{m+n} = \hat{\mu}$ and 
$\hat{\pi}^\top\mathbf{1}_{m+n} = \hat{\nu}$ are satisfied.

From ~\cref{lem:PropEquiv}, we know that,

\begin{equation*}
\TC(\pi; C, \alpha, \beta) = \langle \widehat{C}, \hat{\pi} \rangle_F + \rho \left( \langle \alpha, \mu \rangle + \langle \beta, \nu \rangle \right).
\end{equation*}

We also know that for any $\hat\pi \in \Pi(\hat \mu, \hat \nu)$, $\exists$ $\pi \in \Pi_{\le}(\mu, \nu)$ s.t. $\pi = \hat{\pi}[1:m][1:n]$. Moreover, $\rho \left( \langle \alpha, \mu \rangle + \langle \beta, \nu \rangle \right)$ is a  constant. Therefore, $\TC(\pi; C, \alpha, \beta)$ is minimized when $\langle \widehat{C}, \hat \pi \rangle_F$ is minimized. Consequently, for any $\hat\pi \in \Pi(\hat \mu, \hat \nu)$ that minimizes \cref{eq:OTeqv} 
then the top left corner  $\pi = \hat{\pi}[1:m][1:n]$ satisfies $\pi \in \Pi_{\le}(\mu, \nu)$, and $\pi$ furthermore minimizes \cref{eq:UGM}.  

\end{proof}

\subsection{Proofs}

\lemmaopt* 

\begin{proof}

When total variation is assumed as the divergence function $D(.|.)$, ~\cref{eq:UGM-tc} can be written as 
\[
\TC(\pi; C, \alpha, \beta) \coloneqq \langle \pi, C \rangle_F + \rho\left( \langle \alpha, \mu - \pi_1 \rangle 
    + \langle \beta, \nu - \pi_2 \rangle \right).\]

For any given source sample $p \in V_S$ and any given target sample $q \in V_T$, we can write $\TC(\pi; C, \alpha, \beta)$ by separating the terms related to  masses of $p \in V_S$ and $q  \in V_T$ (See \cref{eq:TC} as well). 

\begin{align*}
    \TC(\pi; C, \alpha, \beta) 
    = \sum_{\substack{1\le i \le m \\ i \neq p}}\sum_{\substack{1\le j \le n \\ i \neq q}} \pi_{ij}C_{ij}+ \sum_{\substack{1\le j \le n \\ i \neq q}} \pi_{pj}C_{pj}+ \sum_{\substack{1\le i \le m \\ i \neq p}} \pi_{iq}C_{iq} +  \pi_{pq}C_{pq}\\
    + \rho \left(\sum_{\substack{1\le i \le m \\ i \neq p}} \alpha_{i}(\mu_{i}-(\pi_{1})_{i})+ \alpha_{p}\left(\mu_{p}-\sum_{\substack{1\le j \le n \\ i \neq q}} \pi_{pj}-\pi_{pq}\right)\right)
    \\
    +\rho \left( \sum_{\substack{1\le j \le n \\ i \neq q}} \beta_{j}(\nu_{j}-(\pi_{2})_{j}) +  \beta_{q} \left(\nu_{q}-\sum_{\substack{1\le i \le m \\ i \neq p}} \pi_{iq}-\pi_{pq}\right)\right) 
\end{align*}

We consider two plans $\pi, \pi^\prime \in \Pi_{\leq}(\mu, \nu)$ such that $\pi_{pq} \neq \pi^\prime_{pq}$, but otherwise $\pi_{ij} = \pi^\prime_{ij}$ for all other $1 \le i \le m$ ,$1 \le j \le n$. Note that this is possible as they are partial transport plans that do not have strict marginal equality constraints. In this case we find that
\[
\TC(\pi; C, \alpha, \beta) - \TC( \pi^\prime; C, \alpha, \beta)
= (\pi_{pq} - \pi^\prime_{pq}) ( C_{pq} - \rho (\alpha_p + \beta_q))
\]
We note that this implies strict monotonicity of the total transport cost if $C_{pq} - \rho (\alpha_p + \beta_q) > 0$, i.e. that $\TC(\pi; C, \alpha, \beta) > \TC(\pi^\prime; C, \alpha, \beta)$ if $\pi_{pq} > \pi^\prime_{pq}$.

Therefore, if $C_{pq} - \rho (\alpha_p + \beta_q) > 0$ then the optimal strategy is always going to be to set $\pi_{pq} = 0$, ensuring the lowest total cost possible. 
In other words, 
if for any sample $p \in V_S$ and $q \in V_T$, $C_{pq} > \rho(\alpha_{p}+\beta_{q})$, it is always less costly to transport no mass from $p$ to $q$. 
 
Therefore, if $\pi^* \in \Pi_{\le}(\mu, \nu)$ be any optimal solution of ~\cref{eq:UGM}. Then, $\forall i \in V_S,\forall j \in V_T $, if $C_{ij}>\rho(\alpha_i+\beta_j)$ then  $\pi_{ij}^\star = 0$.  

\end{proof}

\lemmaoto*

\begin{proof}

Let us define extended vectors $\hat{\nu} = \mathbf{1}_{m+n}, \hat{\mu} = \mathbf{1}_{m+n} $. Once again we make use of the matrix $\widehat{C}$ as defined in \cref{eq:C_hat_app}.

Next, we consider the following optimal transport problem that optimizes over the set of admissible couplings $    \Pi(\hat{\mu},\hat{\nu})= \{ \pi \in \mathbb{R}_{\geq 0}^{(m+n)\times (m+n)} | \pi \mathbf{1}_{m+n}=\hat{\mu}=\mathbf{1}_{m+n}, \pi^{\intercal  }\mathbf{1}_{m+n} =\hat{\nu}=\mathbf{1}_{m+n} \}
$.

\begin{equation}
\label{eq:OTeqvOpt2}    \OT(\hat{\mu},\hat{\nu}) = \min_{{\pi} \in \Pi(\hat{\mu}, \hat{\nu})}\;\langle\,\pi,\widehat{C} \rangle.
\end{equation}

From the results of ~\cref{subsec:POTeqOT}, we can derive that the optimization problem $\OT(\hat{\mu},\hat{\nu})$ given in ~\cref{eq:OTeqvOpt2} has a clear equivalence to the optimization problem $\OPT_{\rho}(\mu,\nu)$ given in ~\cref{eq:UGM} when $K=m+n$. From ~\cref{subsec:POTeqOT}, we also know that the optimal plan for ~\cref{eq:UGM} can be obtained by restricting an optimal plan $\hat{\pi} \in \argmin_{\pi \in \Pi(\hat{\mu}, \hat{\nu})}\;\langle\,\pi,\widehat{C} \rangle$ to the set $[1:m] \times [1:n]$.

It should be noted that when $\hat{\nu} = \mathbf{1}_{m+n}$, and $\hat{\mu} = \mathbf{1}_{m+n} $, the set $\Pi(\hat{\mu},\hat{\nu})$ corresponds to the set of doubly stochastic matrices. By the Birkhoff-von-Neumann theorem, it is known that the set of doubly stochastic matrices forms a convex polytope and its extremal points are permutation matrices. From the Linear Programming theory, it is known that at least one optimal solution to a linear program over a polytope (here, it is the set of doubly stochastic matrices) lies at an extreme point of the polytope.  Thus, there always exist an optimal plan $\hat{\pi} \in \Pi(\hat{\mu},\hat{\nu})$ which is a permutation matrix. Therefore, the optimal plan $\pi^\star \in \argmin_{\Pi_{\leq}(\mu,\nu)}\TC(\pi; C, \alpha, \beta)$  that is obtained by restricting such $\hat{\pi}\in \argmin_{\pi \in \Pi(\hat{\mu}, \hat{\nu})}\;\langle\,\pi,\widehat{C} \rangle$ (which is a permutation matrix) to its restriction to $[1:m] \times [1:n]$ (i.e., $\pi^{*}=\hat{\pi}[1:m][1:n]$) will be a binary matrix with atmost one 1 per row or column. Therefore, we can always find an optimal plan $\pi^\star \in \argmin_{\Pi_{\leq}(\mu,\nu)}\TC(\pi; C, \alpha, \beta)$ s.t. $\pi^\star \in \mathcal{M}$.
\end{proof}

\theoremeqLAP*
\begin{proof}
We first recall the definition of $\overline C$ given component-wise in \cref{eq:C_bar} by
\[
    \overline{C}_{ij}= 
\begin{cases}
    C_{ij},& \text{if } i \leq m \text{ and }C_{ij} \leq \rho(\alpha_i+\beta_j) \\
    \rho(\alpha_i+\beta_j) ,& \text{if }  i \leq m \text{ and }C_{ij} > \rho(\alpha_i+\beta_j) \\
    \rho(\alpha_*+\beta_j) ,& \text{if }  i >m     
\end{cases}
\]
We use case-by-case definition to decompose the Frobenius inner product. 

We now define index sets of rows and columns which indicate where $\overline \pi$ takes a value of 1 that violates the conditions for setting $\overline C_{ij}$, that is
\begin{align*}
    \mathcal{I} &= \{(i,j) : 1 \le i \le m, 1 \le j \le n, \text{ such that } \overline \pi_{ij} = 1 \text{ and } C_{ij} \le \rho (\alpha_i + \beta_j) \} \\
    \mathcal{J} &= \{ (i,j) : 1 \le i \le m, 1 \le j \le n, \text{ such that } \overline \pi_{ij} = 1 \text{ and } C_{ij} > \rho (\alpha_i + \beta_j) \} \\
    \mathcal{K} &= \{ (i,j) : m < i \le n, 1 \le j \le n, \text{ such that } \overline \pi_{ij} = 1 \}
\end{align*}
and we note that the sets are disjoint within $\{1,\ldots, n\} \times \{1, \ldots, n\}$.
We now decompose the inner product as follows
\begin{align*}
\langle \overline \pi, \overline C \rangle
&= \sum_{(i,j) \in \mathcal{I}} C_{ij} h(\overline \pi)_{ij}
+ \rho \left(\sum_{(i,j) \in \mathcal{J}}  (\alpha_i + \beta_j) \overline \pi_{ij}
+ \sum_{(i,j) \in \mathcal{K}} (\alpha_\star + \beta_j) \overline \pi_{ij} \right)
\end{align*}
where we have immediately used the fact that $\overline \pi_{ij} = h(\overline \pi)_{ij}$ for all $(i,j) \in \mathcal{I}$. 
Note furthermore that $\mathcal{I}$ is exactly the index set where $h(\overline \pi)$ is non-zero, hence that
\[
\sum_{(i,j) \in \mathcal{I}} C_{ij} h(\overline \pi)_{ij} 
= \langle C, h(\overline \pi) \rangle_F.
\]
We recall the notation $h(\overline \pi)_1$ that denotes the first marginal given by $(h(\overline \pi)_1)_i = \sum_{j = 1}^n h(\overline \pi)_{ij}$.
For any fixed row index $i$ where for some $j$ the pair $(i,j) \in \mathcal{J}$, then the the marginal vectors have values $(\overline \pi_1)_i = 1$ whereas $(h(\overline \pi)_1)_i = 0$. On the other hand, if for fixed $i$ if there is no pair $(i,j) \in \mathcal{J}$, then the marginal is given by $(h(\overline \pi)_1)_i = 1$ but it does not contribute to the sum, and we see that we have
\[
\sum_{(i,j) \in \mathcal{J}} \alpha_i \overline \pi_{ij} 
= \sum_{i=1}^m (1 - (h(\overline \pi)_1)_i) \alpha_i.
\]
We have a similar logic for the column sums, with the exception that some of the marginal mass is accounted for in the index set $\mathcal{K}$, so similar to above we have that
\[
\sum_{(i,j) \in \mathcal{J}} \beta_j \overline \pi_{ij} 
+ \sum_{(i,j) \in \mathcal{K}} \beta_j \overline \pi_{ij} 
= \sum_{j=1}^n (1 - (h(\overline \pi)_2)_j) \beta_j.
\]
Finally, as $\overline \pi$ is a complete permutation, we know that each row $m < i \le n$ has some index $j$ for which $\overline \pi_{ij} = 1$ and hence 
\[
\sum_{(i,j) \in \mathcal{K}} \alpha_\star \overline \pi_{ij}
= (n - m) \alpha_\star
\]
All put together this yields
\begin{align*}
\langle \overline \pi, \overline C \rangle
&= \langle C_, h(\overline \pi) \rangle_F
+ \rho \left(\sum_{i=1}^m  \alpha_i (1 - (h(\overline \pi)_1)_i)
+ \sum_{j=1}^n  \beta_j (1 - (h(\overline \pi)_2)_j)
+ (n - m) \alpha_\star \right) \\
&= \langle C_, h(\overline \pi) \rangle_F
+ \rho \left(\langle \alpha, \mathbf{1}_m - h(\overline \pi)_1 \rangle
+ \langle \beta, \mathbf{1}_n - h(\overline \pi)_2 \rangle \right)
+ \rho (n - m) \alpha_\star \\
&= \TCS(h(\overline\pi); C, \alpha, \beta)  + \rho (n-m) \alpha_\star.
\end{align*}
\end{proof}

Before we prove ~\cref{theorem:lap2}, we prove the following two technical lemmas which will be used to prove ~\cref{theorem:lap2}.
\begin{restatable}[]{lemma}{lemmaMarginalFeasibility} 
     \label{lemma marginalf}
Let \(\pi^* \in \mathcal{M}\) be a solution for the partial graph matching problem, that is \(\pi^\star \in \argmin_{\pi \in \mathcal{M}} \TCS(\pi; C, \alpha, \beta)\). For any $1 \leq p \leq m$ and any $1 \leq q \leq n$, 
\[
((\pi^*_1)_p =0) \text{ and } ((\pi^*_2)_q =0) \implies C_{pq} \geq \rho(\alpha_{p}+\beta_{q}).
\]
\end{restatable}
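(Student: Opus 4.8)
\textbf{Proof plan for Lemma \ref{lemma marginalf}.}

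The plan is to argue by contradiction. Suppose $\pi^\star \in \mathcal{M}$ is optimal for $\PGM_\rho$, and suppose there exist indices $p,q$ with $(\pi^\star_1)_p = 0$ and $(\pi^\star_2)_q = 0$ but $C_{pq} < \rho(\alpha_p + \beta_q)$. The idea is to build a strictly better feasible partial assignment by matching $p$ to $q$. First I would define $\pi'$ to agree with $\pi^\star$ everywhere except that $\pi'_{pq} = 1$. Since row $p$ of $\pi^\star$ is all zero and column $q$ of $\pi^\star$ is all zero, adding a single $1$ in position $(p,q)$ keeps all row sums and column sums at most $1$, so $\pi' \in \mathcal{M}$ and is therefore feasible for \cref{eq:PGM}.

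The second step is to compute the change in objective. Using the explicit form of $\TCS$ in \cref{eq:TC}, only the terms involving entry $(p,q)$ change: the transport cost picks up $C_{pq}$, while the penalty terms $\rho\langle\alpha, \mathbf{1}_m - \pi_1\rangle$ and $\rho\langle\beta, \mathbf{1}_n - \pi_2\rangle$ each decrease because $(\pi'_1)_p = 1 > 0 = (\pi^\star_1)_p$ and $(\pi'_2)_q = 1 > 0 = (\pi^\star_2)_q$. Concretely,
\[
\TCS(\pi'; C, \alpha, \beta) - \TCS(\pi^\star; C, \alpha, \beta) = C_{pq} - \rho(\alpha_p + \beta_q).
\]
By assumption this quantity is strictly negative, so $\TCS(\pi'; C, \alpha, \beta) < \TCS(\pi^\star; C, \alpha, \beta)$, contradicting the optimality of $\pi^\star$. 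Hence no such $p,q$ can exist, which is exactly the claimed implication (the contrapositive of "$C_{pq} < \rho(\alpha_p+\beta_q)$" being "$C_{pq} \geq \rho(\alpha_p+\beta_q)$").

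I do not expect a serious obstacle here; the lemma is essentially a first-order optimality observation analogous to \cref{lem:1}, but on the discrete set $\mathcal{M}$ rather than on $\Pi_\le(\mu,\nu)$. The one point requiring a little care is verifying feasibility of $\pi'$ — specifically that flipping a single entry from $0$ to $1$ does not violate the row/column inequality constraints — and this is precisely where the two hypotheses $(\pi^\star_1)_p = 0$ and $(\pi^\star_2)_q = 0$ are used: they guarantee that both the affected row and the affected column had no prior match, so the new assignment is still injective on each side. The objective computation is then routine bookkeeping with the bilinear/linear terms of $\TCS$.
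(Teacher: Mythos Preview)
Your proposal is correct and follows essentially the same approach as the paper: a contradiction argument that constructs $\pi' \in \mathcal{M}$ by setting $\pi'_{pq}=1$ (feasible precisely because row $p$ and column $q$ are empty), then computes $\TCS(\pi') - \TCS(\pi^\star) = C_{pq} - \rho(\alpha_p + \beta_q) < 0$ to contradict optimality. The paper's version is slightly more expansive in writing out the full expansion of $\TCS$ with the $p,q$ terms separated, but the logic and the key steps are identical.
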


\begin{proof}
We prove this by contradiction and assume there exist $p,q$ such that $(\pi^*_1)_p =0$, $(\pi^*_2)_q =0$, and $C_{pq} < \rho(\alpha_{p}+\beta_{q})$.

As $(\pi^*_1)_p =0$ and $(\pi^*_2)_q =0$, $\TCS(\pi^*; C, \alpha, \beta)$ can be expanded as 
\begin{align*}
    \TCS(\pi^*; C, \alpha, \beta) 
    = \sum_{\substack{1\le i \le m \\ i \neq p}}\sum_{\substack{1 \le j \le n \\ j \neq q}} \pi^*_{ij}C_{ij}
    + \rho \left(\sum_{\substack{1\le i \le m \\ i \neq p}} \alpha_{i}(1-(\pi^*_{1})_{i})+ \alpha_{p}\right)
    \\
    +\rho \left( \sum_{\substack{1 \le j \le n \\ j \neq q}}\beta_{j}(1-(\pi^*_{2})_{j}) +  \beta_{q} \right) 
\end{align*}

Now we consider a plan $ \pi^\prime \in \mathcal{M}$ such that $\pi^\prime_{pq}=1$, but otherwise $\pi^\prime_{ij} = \pi^*_{ij}$ for all other $1 \le i \le m$ and $1 \le j \le n$. Note that this is possible as mass of $p$ and $q$ is not transported in $\pi^*$. Based on marginal constraints, we know that $\pi^\prime_{pq}=1$ implies that $\pi^\prime_{pj}=0$ for all $j \ne q$  and $\pi^\prime_{iq}=0$ for all $i \ne p$. Therefore, $\TCS(\pi^\prime; C, \alpha, \beta)$ can be rewritten as, 
\begin{align*}
    \TCS(\pi^\prime; C, \alpha, \beta) = \sum_{\substack{1\le i \le m \\ i \neq p}} \sum_{\substack{1 \le j \le n \\ j \neq q}} \pi^\prime_{ij}C_{ij} + C_{pq}
    + \rho \left(\sum_{\substack{1\le i \le m \\ i \neq p}} \alpha_{i}(1-(\pi^\prime_{1})_{i})\right)
    \\
    +\rho \left( \sum_{\substack{1 \le j \le n \\ j \neq q}} \beta_{j}(1-(\pi^\prime_{2})_{j}) \right) 
\end{align*}
Thus we see that the difference in the objective between these two plans is $ \TCS(\pi^\prime; C, \alpha, \beta)-\TCS(\pi^*; C, \alpha, \beta)= C_{pq} - \rho(\alpha_{p}+\beta_{q})$, and finally as $C_{pq} < \rho(\alpha_{p}+\beta_{q})$ we have
\[ 
\TCS(\pi^\prime; C, \alpha, \beta)<\TCS(\pi^*; C, \alpha, \beta).
\]
However, this is a contradiction as $\pi^\star \in \argmin_{\pi \in \mathcal{M}} \TCS(\pi; C, \alpha, \beta)$. Therefore, the assumption is wrong. Thus, for any \(\pi^\star \in \argmin_{\pi \in \mathcal{M}} \TCS(\pi; C, \alpha, \beta)\), 
$(\pi^*_1)_p =0$ and $(\pi^*_2)_q =0$ implies that $C_{pq} \geq \rho(\alpha_{p}+\beta_{q})$.
\end{proof}

\begin{restatable}[]{lemma}{OPT2LAP2} 
     \label{lemma opt2lap2}
Let \(\pi^* \in \mathcal{M}\) be a solution for the partial graph matching problem, that is \(\pi^\star \in \argmin_{\pi \in \mathcal{M}} \TCS(\pi; C, \alpha, \beta)\). Then,  there exists \(\overline{\pi} \in \mathcal{P}_n\) such that $\langle \overline{\pi}, \overline{C} \rangle_F = \TCS(\pi^\star; C, \alpha, \beta) + \rho (n - m) \alpha_*$. 
\end{restatable}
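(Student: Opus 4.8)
The idea is to exhibit an explicit $n\times n$ permutation matrix $\overline\pi\in\mathcal{P}_n$ that agrees with $\pi^\star$ on its matched pairs and routes the ``leftover'' rows --- the unmatched source nodes together with the $n-m$ dummy rows --- onto the unmatched target columns by an arbitrary bijection, and then to evaluate $\langle\overline\pi,\overline C\rangle_F$ entry by entry using the case definition of $\overline C$ in \cref{eq:C_bar}. I would first record a preliminary fact: any minimiser $\pi^\star\in\argmin_{\pi\in\mathcal{M}}\TCS(\pi;C,\alpha,\beta)$ satisfies $\pi^\star_{ij}=0$ whenever $C_{ij}>\rho(\alpha_i+\beta_j)$. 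This is the $\mathcal{M}$-analogue of \cref{lem:1}, proved by the same one-entry swap: if $\pi^\star_{ij}=1$ on such a pair, zeroing that entry keeps the matrix in $\mathcal{M}$ and changes the objective by $-(C_{ij}-\rho(\alpha_i+\beta_j))<0$, contradicting optimality. In particular, on every matched pair $(i,j)$ with $1\le i\le m$ the first case of \cref{eq:C_bar} applies, so $\overline C_{ij}=C_{ij}$.

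Next I would set up the bookkeeping. Put $U_S=\{i:1\le i\le m,\ (\pi^\star_1)_i=0\}$, $U_T=\{j:1\le j\le n,\ (\pi^\star_2)_j=0\}$, and $D=\{m+1,\dots,n\}$. Since $\pi^\star$ is a partial matching, if $k$ is its number of matched pairs then $|U_S|=m-k$, $|U_T|=n-k$, $|D|=n-m$, hence $|U_S\cup D|=|U_T|$; fix any bijection $\tau\colon U_S\cup D\to U_T$. Define $\overline\pi\in\{0,1\}^{n\times n}$ by letting $\overline\pi_{ij}=1$ exactly when either $1\le i\le m$ and $\pi^\star_{ij}=1$, or $i\in U_S\cup D$ and $j=\tau(i)$. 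A short row/column count (each row gets exactly one $1$; the matched target columns are hit by the matched source rows, and the columns in $U_T$ are hit bijectively via $\tau^{-1}$) shows $\overline\pi\in\mathcal{P}_n$.

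Finally I would compute $\langle\overline\pi,\overline C\rangle_F$ by splitting the $1$'s of $\overline\pi$ into three groups. On the matched pairs $\overline C_{ij}=C_{ij}$, contributing $\langle\pi^\star,C\rangle_F$. For $i\in D$ the third case of \cref{eq:C_bar} gives $\overline C_{i,\tau(i)}=\rho(\alpha_*+\beta_{\tau(i)})$, contributing $\rho(n-m)\alpha_*+\rho\sum_{i\in D}\beta_{\tau(i)}$. The delicate group is $i\in U_S$: here $\tau(i)\in U_T$, so $(\pi^\star_1)_i=0$ and $(\pi^\star_2)_{\tau(i)}=0$, and \cref{lemma marginalf} gives $C_{i,\tau(i)}\ge\rho(\alpha_i+\beta_{\tau(i)})$; whether this inequality is strict (second case of \cref{eq:C_bar}) or an equality (first case), in both cases $\overline C_{i,\tau(i)}=\rho(\alpha_i+\beta_{\tau(i)})$. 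Summing the three groups and using $\sum_{i\in U_S}\alpha_i=\langle\alpha,\mathbf{1}_m-\pi^\star_1\rangle$ together with $\sum_{i\in U_S}\beta_{\tau(i)}+\sum_{i\in D}\beta_{\tau(i)}=\sum_{j\in U_T}\beta_j=\langle\beta,\mathbf{1}_n-\pi^\star_2\rangle$ (valid since $\tau$ maps onto $U_T$) collapses the total to $\TCS(\pi^\star;C,\alpha,\beta)+\rho(n-m)\alpha_*$, as claimed.

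I expect the main obstacle to be precisely that boundary case: since \cref{lemma marginalf} only yields a non-strict inequality, one cannot simply place $\overline C_{i,\tau(i)}$ in the ``infeasible'' branch of \cref{eq:C_bar}, and must instead observe that both branches agree on the diagonal $C_{i,\tau(i)}=\rho(\alpha_i+\beta_{\tau(i)})$. (Alternatively one could check that $h(\overline\pi)$ differs from $\pi^\star$ only by such zero-cost boundary matches and invoke \cref{theorem lap2}, but the direct evaluation above avoids that detour.)
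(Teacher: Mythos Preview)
Your proposal is correct and follows essentially the same approach as the paper's own proof: extend $\pi^\star$ by an arbitrary bijection from the unmatched source rows plus the dummy rows onto the unmatched target columns, then evaluate $\langle\overline\pi,\overline C\rangle_F$ case by case using the contrapositive of \cref{lem:1} on matched pairs and \cref{lemma marginalf} on unmatched-to-unmatched pairs. You are in fact slightly more careful than the paper on two points: you explicitly justify the $\mathcal{M}$-version of \cref{lem:1} (the paper invokes \cref{lem:1}, which is stated only for minimisers over $\Pi_{\le}(\mu,\nu)$), and you explicitly address the boundary case $C_{i,\tau(i)}=\rho(\alpha_i+\beta_{\tau(i)})$ where the two branches of \cref{eq:C_bar} coincide, whereas the paper glosses over this.
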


\begin{proof}
Let us consider some $\pi \in \argmin_{\pi \in \mathcal{M}} \TCS(\pi; C, \alpha, \beta)$. Our objective is to show that we can find $\pi' \in \mathcal{P}_n$ such that $\TCS(\alpha,\beta,\pi,C)+(n-m)\alpha^{\star}= \langle \pi^\prime,\overline{C} \rangle$ where $\overline{C} \in \mathbb{R}^{n \times n}$ is defined as \cref{eq:C_bar}

%Let, $\pi_1 = \pi\mathbf{1}_n$ and $\pi_2=\pi^{\intercal  }\mathbf{1}_m$. 
First, we extend $\pi$ to $\pi^\prime \in \{0,1\}^{n \times n}$ by padding with zeros, defining it as 
 \[
    \pi^\prime_{ij}= 
\begin{cases}
    \pi_{ij},& \text{if } i \leq m  \\
    0 & \text{if }  i > m.
\end{cases}
\]
The idea of the proof that follows is that we define a permutation matrix $\overline \pi \in \mathcal{P}_n$ that is equal to $\pi$ where it is 1, but fills in the unmatched nodes with some arbitrary permutation. That is, we assume a decomposition s.t. for all $1 \leq i,j \leq n$
\begin{equation}
\label{eq:matrixdef}
  \overline{\pi}_{ij} = \pi^\prime_{ij} + \hat{\pi}_{ij},
\end{equation}
with $\pi^\prime_{ij} = 1$ if $\pi_{ij} = 1$, 
%and $(\hat{\pi}_1)_i = 1$ if $(\pi^\prime_1)_i = 0$, as well as $(\hat{\pi}_2)_j = 1$ if $(\pi^\prime_2) = 0$.
and we note that the marginals satisfy $\overline{\pi}_1 = \pi^\prime_1 + \hat{\pi}_1 = \mathbf{1}_n$ and $\overline{\pi}_2 = \pi^\prime_2 + \hat{\pi}_2 = \mathbf{1}_n$.

Now the total cost function \cref{eq:TC} can be rewritten as
\begin{align*}
\label{eq:TC11}
    \TCS(\pi; C, \alpha, \beta) %+ \rho(n-m)\alpha_{*} 
    &= \langle \pi, C \rangle_F + 
    \rho \left( \sum_{i=1}^m \alpha_{i} (\hat{\pi}_1)_i
    + \sum_{j=1}^n \beta_{j} (\hat{\pi}_2)_j \right ) \\
    %+\rho(n-m)\alpha_{*}.
\end{align*}
By the definition of  $\pi^\prime_{ij} $, we know that for all $i>m$, $\pi^\prime_{ij} = 0$. Therefore, for all $i>m$, $\hat{\pi}_1=1$. Thus, we have,
\[
\sum_{i=m+1}^{n} (\hat{\pi}_1)_{i}=(n-m)
\]
Therefore, we can derive the following,
\begin{align}
    \TCS(\pi; C, \alpha, \beta) + \rho(n-m)\alpha_{*} 
    &= \langle \pi, C \rangle_F + 
    \rho \left( \sum_{i=1}^m \alpha_{i} (\hat{\pi}_1)_i
    + \sum_{j=1}^n \beta_{j} (\hat{\pi}_2)_j + \sum_{i=m+1}^{n} (\hat{\pi}_1)_{i}\alpha_{*} \right ) \nonumber \\
    %&= \sum_{i=1}^{m}\sum_{j=1}^{n} \pi_{ij}C_{ij} + 
    %\rho \left( \sum_{i=1}^m (\alpha_{i} \sum_{j=1}^n \hat{\pi}_{ij})
    %+ \sum_{j=1}^n (\beta_{j} \sum_{i=1}^n \hat{\pi}_{ij}) + \sum_{i=m+1}^{n} %(\sum_{j=1}^n \hat{\pi}_{ij}\alpha_{*}) \right ) \\
    %&= \sum_{i=1}^{m}\sum_{j=1}^{n} \pi^\prime_{ij}C_{ij} + 
    %\rho \left( \sum_{i=1}^m \sum_{j=1}^n \alpha_{i} \hat{\pi}_{ij}
    %+ \sum_{i=1}^n\sum_{j=1}^n \beta_{j}  \hat{\pi}_{ij} + \sum_{i=m+1}^{n} %\sum_{j=1}^n( \hat{\pi}_{ij}\alpha_{*}) \right ) \\ 
    %&= \sum_{i=1}^{m}\sum_{j=1}^{n} \pi^\prime_{ij}C_{ij} + 
    %\rho \left( \sum_{i=1}^m \sum_{j=1}^n \alpha_{i} \hat{\pi}_{ij}
    %+ \sum_{i=1}^m\sum_{j=1}^n \beta_{j}  \hat{\pi}_{ij} + \sum_{i=m+1}^{n} \sum_{j=1}^n \beta_{j}  \hat{\pi}_{ij}+\sum_{i=m+1}^{n} \sum_{j=1}^n( \hat{\pi}_{ij}\alpha_{*}) \right ) \\
    &= \sum_{\substack{1 \le i \le m \\ 1 \le j \le n}} \pi^\prime_{ij}C_{ij} + 
     \sum_{\substack{1 \le i \le m \\ 1 \le j \le n}} \rho (\alpha_{i} 
    + \beta_{j}) \hat{\pi}_{ij} + \sum_{\substack{m < i \le n \\ 1 \le j \le n}} \rho(\beta_{j}+\alpha_{*})\hat{\pi}_{ij} \label{eq:opt2lap12}
\end{align}

 The contrapositive of \cref{lem:1} tells us that as $\pi$ is optimal, $\pi_{ij}=1$ implies that $C_{ij} \leq \rho(\alpha_{i}+\beta_{j})$. By definition of $\pi^\prime$, we know that, $\pi^\prime_{ij}=1 $ if and only $ \pi_{ij}=1$, so hence $\pi^\prime_{ij}=1$ also implies that $C_{ij} \leq \rho(\alpha_{i}+\beta_{j})$. Thus, based on the definition of $\overline{C}$ in \cref{eq:C_bar}, we can deduce that
 \begin{equation}
 \label{eq:opt2lap13}
     \sum_{\substack{1 \le i \le m \\ 1 \le j \le n}} \pi_{ij}C_{ij} = \sum_{\substack{1 \le i \le m \\ 1 \le j \le n}} \pi^\prime_{ij}C_{ij} = \sum_{\substack{1 \le i \le m \\ 1 \le j \le n}} \pi^\prime_{ij}\overline{C}_{ij}
 \end{equation}

From ~\cref{eq:matrixdef}, we know that, for any $1 \leq i,j \leq n$, if $\hat{\pi}_{ij} = 1$ then $(\pi^\prime_{1})_{i}=0$ and $(\pi^\prime_{2})_{j}=0$. Moreover, from the definition of $\pi^\prime$, for any $1 \leq i \leq m$ and $1 \leq j \leq n$, $(\pi^\prime_{1})_{i}=0$ and $(\pi^\prime_{2})_{j}=0$ if and only if $(\pi_{1})_{i}=0$ and $(\pi_{2})_{j}=0$.
And, from \cref{lemma marginalf}, we know that $(\pi_{1})_{i}=0$ and $(\pi_{2})_{j}=0$ implies that $C_{ij} \geq \rho(\alpha_{i}+\beta_{j})$. Therefore, for any $1 \leq i \leq m$ and $1 \leq j \leq n$, $\hat{\pi}_{ij} = 1$
%\iff ((\pi^\prime_{1})_{i}=0) \wedge ((\pi^\prime_{2})_{j}=0) \implies 
implies that $C_{ij} \geq \rho(\alpha_{i}+\beta_{j})$ hence that $\overline{C}_{ij} = \rho(\alpha_i + \beta_j)$. Thus, we can deduce,
\begin{equation}
\label{eq:opt2lap14}
    \sum_{\substack{1 \le i \le m \\ 1 \le j \le n}} \rho (\alpha_{i} 
    + \beta_{j}) \hat{\pi}_{ij} 
    = \sum_{\substack{1 \le i \le m \\ 1 \le j \le n}}  \hat{\pi}_{ij} \overline{C}_{ij}
\end{equation}

From definition of $\overline{C}$, we know that for all $i > m$,$\overline{C}_{ij}=\rho(\alpha_{*}+\beta_{j})$. Therefore, we have,
\begin{equation}
\label{eq:opt2lap15}
    \sum_{\substack{m < i \le n \\ 1 \le j \le n}} \rho(\beta_{j}+\alpha_{*})\hat{\pi}_{ij} 
    = \sum_{\substack{m < i \le n \\ 1 \le j \le n}} \hat{\pi}_{ij} \overline{C}
\end{equation}

From ~\cref{eq:opt2lap13,eq:opt2lap14,eq:opt2lap15}, and using the fact that $\pi^\prime_{ij} = 0$ for $m < i \le n$, we can rewrite ~\cref{eq:opt2lap12} as
\begin{align}
\label{eq:opt2lap16}
    \TCS(\pi; C, \alpha, \beta) + \rho(n-m)\alpha_{*} 
    &= \sum_{\substack{1 \le i \le m \\ 1 \le j \le n}} \pi^\prime_{ij}\overline{C}_{ij} + 
     \sum_{\substack{1 \le i \le m \\ 1 \le j \le n}}  \hat{\pi}_{ij}\overline{C}_{ij} + \sum_{\substack{m < i \le n \\ 1 \le j \le n}} \hat{\pi}_{ij}\overline{C}_{ij} \nonumber \\
     &= \sum_{\substack{1 \le i \le n \\ 1 \le j \le n}} \pi^\prime_{ij}\overline{C}_{ij} + 
     \sum_{\substack{1 \le i \le n \\ 1 \le j \le n}}  \hat{\pi}_{ij}\overline{C}_{ij}  \nonumber \\
    %&= \sum_{i=1}^{n}\sum_{j=1}^{n} \overline{\pi}\overline{C}_{ij} 
    &= \langle \overline{\pi}, \overline{C} \rangle_F %\label{eq:opt2lap18}
\end{align}
\end{proof}
\theoremeqLAPthree*
\begin{proof}
Let $ \overline{\pi}^\star \in \argmin_{\overline{\pi} \in \mathcal{P}} \langle \overline{\pi}, \overline{C} \rangle_F\ $ be any optimal solution of ~\cref{eq:LAP} and  $\pi^\star \in \argmin_{\pi \in \mathcal{M}}\TCS(\pi; C, \alpha,\beta)$ be any optimal solution of ~\cref{eq:UGM}

From ~\cref{lemma opt2lap2}, we know that,  there exists a $\overline{\pi} \in  \mathcal{P}_n $ s.t. $\TCS(\pi^\star; C, \alpha,\beta)+(n-m)\alpha_* = \langle \overline{\pi},\overline{C} \rangle$. Therefore,
\begin{equation}
\label{eq:optimalinequality1}
    \langle \overline{\pi}^{*},\overline{C} \rangle 
    \leq \langle \overline{\pi},\overline{C} \rangle 
    =\TCS(\pi^\star; C,\alpha,\beta)+\rho(n-m)\alpha_*
\end{equation}

From ~\cref{theorem lap2}, we have $h(\overline{\pi}^*) \in \mathcal{M}$ as defined in \cref{eq:h} from $\overline{\pi}^*$ such that the following condition holds,
\begin{equation}
\label{eq:optimalequality}
    \TCS(h(\overline{\pi}^*);C,\alpha,\beta)+\rho(n-m)\alpha_* = \langle \overline{\pi}^*,\overline{C} \rangle
\end{equation}
Finally, from ~\cref{eq:optimalinequality1} and ~\cref{eq:optimalequality},
\[
\TCS(h(\overline{\pi}^*);C,\alpha,\beta)+\rho (n-m)\alpha_*\leq   \TCS(\pi^\star;C,\alpha,\beta)+\rho (n-m)\alpha_* 
%= \argmin_{ \pi \in \mathcal{M}}\TCS(\pi;C,\alpha,\beta)
\]
and as $\pi^\star$ is already a minimizer of $\TCS$, implies that 
$\TCS(h(\overline{\pi}^*);C,\alpha,\beta) =   \TCS(\pi^\star;C,\alpha,\beta)$,
which indicates that $h(\overline{\pi}^*) \in \argmin_{ \pi \in \mathcal{M}} \TCS(\pi;C,\alpha,\beta)$. 

Thus, from any $\overline{\pi}^{*} \in \argmin_{\overline{\pi}' \in \mathcal{P}_{n}} \langle \overline{\pi}, \overline{C} \rangle_F\ $, which is an optimal solution of ~\cref{eq:LAP}, it is possible to derive $h(\overline{\pi}^*) \in \argmin_{ \pi \in \mathcal{M}} \TCS(\pi;C,\alpha,\beta)$, which solves ~\cref{eq:PGM}). Thus, the proof is complete.
\end{proof}

\section{Additional Details on Experiments}

\begin{table*}[t!]
    \centering
     \resizebox{\textwidth}{!}
    {\renewcommand{\arraystretch}{1.2}
\begin{tabular}{l l| c c c c c c c c c c c c c c c c c c c c |c} 
 \toprule
 GM Network & PMH & aero &	bike & bird	& boat	& bottle & bus & car & cat & chair & cow & table & dog & horse & mbike & person & plant & sheep	& sofa	& train	& tv & mean \\ 
 \toprule
NGM-v2 & dummy & 44.7 & 61.9 & 57.1 & 41.9 & 83.9 & 63.9 & \textbf{\textcolor{blue(pigment)}{54.1}} & 60.8 & 40.5 & 64.2 & 36.2 & \textbf{60.6} & 60.8 & 61.9 & \textbf{\textcolor{blue(pigment)}{48.7}} & 91.2 & \textbf{56.2} & 37.4 &63.2 &82.2 & 58.6$\pm$0.5 \\
NGM-v2 & AFAT-U & 45.7 & 67.7 & 57.3 & 44.9& 90.1 & 65.5 & 49.9 & 59.3 & 44.0& 62.0 & \textbf{54.9} & 58.4 & 58.6 & 63.8 & 45.9 & 94.8 & 50.9 & 37.3 & \textbf{\textcolor{blue(pigment)}{74.2}} & 82.8 & 60.2$\pm$0.4\\
NGM-v2 & AFAT-I & 45.0  & 67.3  & 55.9  & 45.6  & 90.3 & 64.6 & 48.7 & 58.0 & 44.7 & 60.2 & \textbf{\textcolor{blue(pigment)}{54.8}} & 57.2 & 57.5 & 63.4 & 45.2 & 95.3 & 49.3 & 41.6 & 73.6 & 82.4 & 59.9$\pm$0.3 \\ 

GCAN & ILP & 46.3 & 67.7 & 57.4 & 45.0 & 87.1 & 64.8 & \textbf{57.5} & \textbf{\textcolor{blue(pigment)}{61.2}} & 40.8 & 61.6 & 37.3 & 59.9 & 59.2 & 64.6 & \textbf{49.7} & 95.1 & \textbf{\textcolor{blue(pigment)}{54.5}} & 28.5 & \textbf{77.9} & \textbf{\textcolor{blue(pigment)}{83.1}} & 59.7$\pm$0.3\\
GCAN & AFAT-U & \textbf{47.1} & \textbf{\textcolor{blue(pigment)}{70.8}} & \textbf{58.1} & 45.8 & \textbf{90.8} & \textbf{66.5} & 49.6 & 58.8 & \textbf{50.6} & \textbf{\textcolor{blue(pigment)}{64.6}} & 47.2 & \textbf{\textcolor{blue(pigment)}{60.5}} & \textbf{62.3} & 65.7 & 46.3 & 95.4 & 52.7 & \textbf{\textcolor{blue(pigment)}{47.4}} & \textbf{\textcolor{blue(pigment)}{74.2}} & \textbf{83.8} & \textbf{62.0}$\pm$\textbf{0.2}\\
GCAN & AFAT-I & 46.1 & 69.9 & 56.1 & 46.6 & \textbf{\textcolor{blue(pigment)}{90.7}} & \textbf{\textcolor{blue(pigment)}{66.1}} & 48.1 & 57.9 & 49.9 & 63.9 & 50.4 & 59.0 & \textbf{\textcolor{blue(pigment)}{61.6}} & 65.0 & 44.7 & \textbf{\textcolor{blue(pigment)}{95.5}} & 50.9 & \textbf{49.2} & 74.0 & \textbf{83.8} & \textbf{\textcolor{blue(pigment)}{61.6}}$\pm$ \textbf{\textcolor{blue(pigment)}{0.3}}\\
 \hline
GCAN & OPGM & \textbf{\textcolor{blue(pigment)}{46.5}}	& 70.1	& 55.8	& \textbf{\textcolor{blue(pigment)}{47.1}}	& 89.5	& 62.4	& 46 & 60.8 &	48.9	& 63.4	& 46.2	& 58.9	& 60.2	& \textbf{\textcolor{blue(pigment)}{67.7}}	& 47.7	& 94.9	& 51.5	& 42	& 73.1	& 82.9	&60.8 $\pm$ 0.4  \\ 
GCAN & OPGM-rs & \textbf{47.1} & \textbf{71.5} & \textbf{\textcolor{blue(pigment)}{57.5}} & \textbf{47.9} & 89.6 & 64.1 & 46.8 & \textbf{62.3} & \textbf{\textcolor{blue(pigment)}{50.0}} & \textbf{65.4} & 33.2 & 59.4 & 61.2 & \textbf{68.7} & 48.5 & \textbf{95.6} & 53.4 & 39.8 & 74.1 & \textbf{\textcolor{blue(pigment)}{83.1}} & 61.0 $\pm$ 0.2\\ 
\bottomrule
\end{tabular}%
}
\caption{Matching F1-score on Pascal VOC Keypoint. The best results are colored in \textbf{black} and the second best are in \textcolor{blue(pigment)}{\textbf{blue}}}.
\label{tab:pascalVOC}
\end{table*}
\begin{table}[] 
    \centering
    \resizebox{0.6\textwidth}{!}{
    \begin{tabular}{l| c c c c c c c c|c } 
 \toprule
 Method & Yeast 5\% & Yeast 10\% & Yeast 15\%& Yeast 20\% & Yeast 25\%  \\ 
 \toprule

 SIGMA & 84.7$\pm${0.4} & 68.8 $\pm$ 2.5 &57.4$\pm${1.1}&46.7 $\pm${2.3} & 41.4 $\pm${1.7} \\

StableGM & 86.1 $\pm$ 0.9 & 75.6 $\pm$ 0.8& 67.9 $\pm$ 1.1 &  63.2 $\pm$ 0.9 &57 $\pm$ 0.6\\ 
\hline
OPGM (ours) & \textbf{\textcolor{blue(pigment)}{87.8}} $\pm$ \textbf{\textcolor{blue(pigment)}{0.3}}  & \textbf{80} $\pm$ \textbf{0.4} & \textbf{71.9} $\pm$ \textbf{0.9} &  \textbf{66.9} $\pm$ \textbf{1.0}&\textbf{58.8} $\pm$ \textbf{0.8}\\
OPGM-rs (ours) & \textbf{88.3} $\pm$ \textbf{0.5} & \textbf{\textcolor{blue(pigment)}{79.8}} $\pm$ \textbf{\textcolor{blue(pigment)}{0.6}} & \textbf{\textcolor{blue(pigment)}{71.7}} $\pm$ \textbf{\textcolor{blue(pigment)}{0.8}} & \textbf{\textcolor{blue(pigment)}{66.2}} $\pm$ \textbf{\textcolor{blue(pigment)}{0.9}} &\textbf{\textcolor{blue(pigment)}{57.7}} $\pm$ \textbf{\textcolor{blue(pigment)}{1.0}}\\
\bottomrule
\end{tabular}
}
    \caption{Node correctness (\%) results on the PPI dataset. The best results are colored in \textbf{black} and the second best are in \textcolor{blue(pigment)}{\textbf{blue}}}.
    
    \label{tab:PPI}
\end{table}

\paragraph{Discussion on results related to PascalVOC and PPI Network Matching}
When observing the results given in ~\cref{tab:pascalVOC}, it can be seen that the mean matching F1-Scores of OPGM, and OPGM-rs are less than that of AFAT-U (GCAN) and AFAT-I (GCAN) models. Furthermore, in the Pascal VOC Keypoint dataset, OPGM performs better than OPGM-rs in terms of  mean matching F1-Score, primarily due to the poor performance of OPGM-rs in the \textit{table} and \textit{sofa} classes. It is important to note that these two classes were removed from the SPair-71K dataset due to ambiguous and poor annotations~\citep{rolinek2020deep}. Moreover, In PPI network matching~\cref{tab:PPI}, at lower noise levels, OPGM-rs tends to outperform OPGM, but as noise increases, OPGM performs better. In OPGM-rs, the learnable cost matrix \( C \) is used in the learning process. However, with noisy inputs, the information in \( C \) can also become noisy, and simultaneously learning \( \alpha \) and \( \beta \) alongside \( C \) may negatively impact the training process as noise levels rise. So based on this observation and the observations related to PPI network matching, we can observe that the performance of OPGM-rs is adversely affected by poor/ambiguous annotations and increased noise in data.

\paragraph{Discussion on efficiency analysis} 
As discussed in ~\cref{sec:experiments}, all the models considered for analysis shown in ~\cref{fig:combined_performance_and_inference} (right) uses the same neural architecture ~\citep{jiang2022graph} to obtain the cross-graph node-to-node affinity matrix and the use Sinkhorn normalization to obtain the doubly stochastic affinity matrix.
Once these matrices are obtained, our model OPGM-rs solves the partial graph matching problem as described in ~\cref{sec:solvingPGM} by solving a linear sum assignment problem. We use the linear sum assignment problem solver provided by SciPy`\citep{virtanen2020scipy} in our implementation. GCAN~\citep{jiang2022graph} solves partial graph matching as an ILP problem, which is usually computationally expensive than solving a LAP. 

In the experimental setting we consider \citep{wang2023deep}, OR-tools, which is the Google's open source software suite for combinatorial optimization has been used to solve the ILP problem. On the other hand, once obtaining the doubly stochastic affinity matrix, AFAT-I and AFAT-U modules use seperate neural modules to first obtain the number of matchings that exist between the two graphs that includes solving an entropy regularized optimal transport problem. Finally, GreedyTopK algorithm~\citep{wang2023deep} which is based on Hungarian algorithm, is used to obtain the final matching. The additional complexities added by separate modules has resulted in higher average inference time of AFAT-I and AFAT-U models, when compared to OPGM-rs. 

The experiments on efficiency analysis were conducted on a Linux server with an Intel Xeon W-2175 2.50GHz processor alongside 28 cores, NVIDIA RTX A6000 GPU, and 512GB of main memory.

\paragraph{Hyperparameters}

For the image keypoint matching task, the hyper-parameters of OPGM-rs are searched in the following range:  $\rho$ $\in \{0.1 ,0.2, 0.3, 0.35, 0. 4, 0.5\}$, $\lambda$ $\in \{0.01,0.2,0.3,0.4,0.5,0.6,$ $0.8,1.0\}$, learning rate $\in \{0.001, 0.002\}$, VGG16 backbone learning rate  $\in \{0.0001\}$, batch size $\in \{4,8\}$,  and the number of epochs $\in \{15, 20, 25\}$. We use the Adam algorithm \citep{diederik2014adam} as our optimizer. The initial learning rate decays with a factor of 0.5 after every 2 epochs.
For the PPI network matching task, the hyper-parameters of OPGM-rs are searched in the following ranges:  $\rho = 10^{11}$, $\lambda \in \{0.25, 0.5, 0.75, 1\}$,  learning rate $\in \{0.0001, 0.0002\}$,  and the number of epochs = 100. Adam algorithm \citep{diederik2014adam} was used as the optimizer.\looseness=-1

\begin{filecontents}{error_data.csv}
Dataset, Mismatching, Partiality, Total
IMCPT100, 4.88, 22.17, 27.05
IMCPT50, 6.12, 20.12, 26.24
Spair71K, 8.52, 30.87, 39.39
VOC, 13.49, 21.92, 35.41
\end{filecontents}
\pgfplotstableread[col sep=comma,]{error_data.csv}\datatable

\pgfplotsset{every axis/.append style={
                    %axis label style={font=\small},
                    label style={font=\small},
                    tick label style={font=\scriptsize}  
                    }}

\begin{figure}[htbp]
\centering
\begin{tikzpicture}
\begin{axis}[
    width=0.4\textwidth,
    ybar,
    bar width=5pt,
    xlabel={\textbf{Datasets}},
    ylabel={\textbf{Error Rate (\%)}},
    xtick=data,
    xticklabels from table={\datatable}{Dataset},
    xticklabel style={rotate=40, anchor=north east, xshift=10pt},
    ymajorgrids=true,
    grid style={dotted, gray!30},
    ymin=0,
    ymax=50,
    legend style={
        at={(0.5,1.05)},
        anchor=south,
        legend columns=3,
        cells={anchor=west},
        font=\small,
        draw=none,
        fill=white,
        opacity=0.9
    },
    enlarge x limits={abs=0.5},       % Reduces the spacing between bar groups
    xtick distance=1     
]
    \addplot[fill=color1, draw=color1!80] table [x expr=\coordindex, y=Mismatching]{\datatable};
    \addplot[fill=color2, draw=color2!80] table [x expr=\coordindex, y=Partiality]{\datatable};
    \addplot[fill=color3, draw=color3!80] table [x expr=\coordindex, y=Total]{\datatable};

    \legend{\textbf{Mismatching}, \textbf{Partiality}, \textbf{Total}}
\end{axis}
\end{tikzpicture}
\caption{Failure mode analysis across four different datasets.}\vspace{-0.4cm} 
\label{fig:error-analysis}
\end{figure}

\paragraph{Failure mode analysis}

The feasibility of matching a pair of nodes \(i\) and \(j\) is influenced by their cost \(C_{ij}\), the matching biases \(\alpha_i\) and \(\beta_j\), and the hyperparameter \(\rho\). A specific failure mode we observed is when node pairs that should be matched according to the ground truth are deemed infeasible by the optimization objective in Eq. (9) (as described in Theorem 5.1). Consequently, these nodes remain unmatched in the final solution. This failure mode typically arises when the graph matching network \(\textsc{NN}_\theta\) assigns an excessively low affinity (resulting in a high matching cost) to pairs of nodes that should otherwise be matched. 

We quantify the error by analyzing:
\begin{itemize}
    \item \emph{Partiality error:} The percentage of true matches (as per the ground truth) that fail to meet the feasibility condition outlined in Theorem 5.1. This error reflects mismatches caused by incorrect partiality identification involving matching biases.

\item \emph{Mismatching error:} The percentage of true matches that are not identified as true matches even though they meet the feasibility condition.
\end{itemize}

Figure~\ref{fig:error-analysis} demonstrates the partiality error, mismatching error and total error related to this failure mode across four different datasets. The total error is equal to the summation of partiality error and the mismatching error.

\end{document}